\newcommand\thefont{\expandafter\string\the\font}
\theoremstyle{plain}
\newtheorem{theorem}{Theorem}[section]
\newtheorem{corollary}[theorem]{Corollary}
\theoremstyle{definition}
\newtheorem{definition}[theorem]{Definition}
\theoremstyle{remark}
\newtheorem{example}[theorem]{Example}
\newcommand{\set}[1]{\mathcal{#1}}
\newcommand{\mat}[1]{\bm{#1}}
\newcommand{\ten}[1]{\bm{\mathcal{#1}}}
\newcommand{\inner}[2]{\left\langle #1, #2 \right\rangle}
\newcommand{\innerfrob}[2]{{\langle #1, #2 \rangle}}
\newcommand{\vectorize}[1]{\operatorname{\ff{vec}}\left(#1\right)}
\newcommand{\tensorize}[1]{\operatorname{\ff{ten}}\left(#1, M_1, M_2, \ldots, M_D\right)}
\newcommand{\tensorizeQuantized}[1]{\operatorname{\ff{ten}}\left(#1, Q,Q,\ldots,Q\right)}
\newcommand{\tensorizeTimes}[1]{\operatorname{\ff{ten}}(#1, \underbrace{M, M, \ldots, M}_{D \ \text{times}})}
\newcommand{\tensorizeQuantizedTimes}[1]{\operatorname{\ff{ten}}(#1, \underbrace{Q,Q,\ldots,Q}_{D K \ \text{times}})}
\newcommand{\kron}{\otimes}
\newcommand{\conj}{^*}
\newcommand{\order}[1]{$\mathcal{O}(#1)$}
\crefname{equation}{equation}{equations}
\Crefname{equation}{Equation}{Equations}
\crefname{theorem}{theorem}{theorem}
\Crefname{theorem}{Theorem}{Theorems}
\crefname{definition}{definition}{definitions}
\crefname{corollary}{corollary}{corollaries}
\Crefname{corollary}{Corollary}{Corollaries}
\crefname{lemma}{lemma}{lemmas}
\Crefname{lemma}{Lemma}{Lemmas}
\crefname{appendix}{appendix}{appendices}
\crefname{section}{section}{sections}
\crefname{figure}{figure}{figures}
\crefname{table}{table}{tables}
\begin{document}

\twocolumn[
\aistatstitle{Quantized Fourier and Polynomial Features for more Expressive Tensor Network Models}
\aistatsauthor{Frederiek Wesel \And Kim Batselier}
\aistatsaddress{Delft Center for Systems and Control\\ Delft University of Technology \And Delft Center for Systems and Control\\ Delft University of Technology} ]

\begin{abstract}
In the context of kernel machines, polynomial and Fourier features are commonly used to provide a nonlinear extension to linear models by mapping the data to a higher-dimensional space. Unless one considers the dual formulation of the learning problem, which renders exact large-scale learning unfeasible, the exponential increase of model parameters in the dimensionality of the data caused by their tensor-product structure prohibits to tackle high-dimensional problems. One of the possible approaches to circumvent this exponential scaling is to exploit the tensor structure present in the features by constraining the model weights to be an underparametrized tensor network.
In this paper we quantize, i.e. further tensorize, polynomial and Fourier features. 
Based on this feature quantization we propose to quantize the associated model weights, yielding quantized models. We show that, for the same number of model parameters, the resulting quantized models have a higher bound on the VC-dimension as opposed to their non-quantized counterparts, at no additional computational cost while learning from identical features.
We verify experimentally how this additional tensorization regularizes the learning problem by prioritizing the most salient features in the data and how it provides models with increased generalization capabilities. We finally benchmark our approach on large regression task, achieving state-of-the-art results on a laptop computer.
\end{abstract}

\section{INTRODUCTION}
In the context of supervised learning, the goal is to estimate a function $f\left(\cdot\right):\mathcal{X}\rightarrow\mathcal{Y}$ given $N$ input-output pairs $\left\{\mat{x}_n,y_n\right\}_{n=1}^N$, where $\mat{x}\in\mathcal{X}$ and $y\in\mathcal{Y}$. Kernel machines accomplish this by lifting the input data into a high-dimensional feature space by means of a \emph{feature map} $\mat{z}\left(\cdot\right):\mathcal{X}\rightarrow\mathcal{H}$ and seeking a linear relationship therein:
\begin{equation}\label{eq:KernelMachine}
     f\left(\mat{x}\right) = \inner{\mat{z}\left( \mat{x} \right)}{\mat{w}}.
\end{equation}
Training such a model involves the minimization of the regularized empirical risk given a convex measure of loss $\ell\left(\cdot,\cdot\right): \mathcal{H} \times \mathcal{Y} \rightarrow \mathbb{R}_{+}$
\begin{equation}\label{eq:EmpiricalRisk}
    R_\text{empirical}\left(\mat{w}\right) = \frac{1}{N}\sum_{n=1}^N \ell \left(\inner{\mat{z}\left(\mat{x}_n\right)}{\mat{w}},y_n\right) + \lambda \left\vert\left\vert  \mat{w} \right\vert \right\vert^2.
\end{equation}
Different choices of loss yield the \emph{primal} formulation of different kernel machines. For example, squared loss results in kernel ridge regression (KRR) \citep{suykens_least_2002}, hinge loss in support vector machines (SVMs) \citep{cortes_support-vector_1995}, and logistic loss yields logistic regression. Different choices of the feature map $\mat{z}$ allow for modeling different nonlinear behaviors in the data. 
% Commonly-used choices are Fourier~\citep{rahimi_random_2007} and polynomial~\citep{boser_training_1992} features. 
In this article we consider tensor-product features
\begin{equation}\label{eq:TensorProductFeatureMap}
    \mat{z}\left(\mat{x}\right) =  \bigotimes_{d=1}^D \mat{v}^{(d)}\left(x_d\right),
\end{equation}
where $\mat{v}^{(d)}(\cdot):\mathbb{C}\rightarrow\mathbb{C}^{M_d}$ is a feature map acting on each element of the $d$-th component $x_d$ of $\mat{x}\in\mathbb{C}^D$. Here $\kron$ denotes the \emph{left} Kronecker product \citep{cichocki_tensor_2016}. This tensor-product structure arises when considering product kernels \citep{shawe-taylor_kernel_2004,hensman_variational_2017,solin_hilbert_2020}, Fourier features \citep{wahls_learning_2014}, when considering B-splines \citep{karagoz_nonlinear_2020} and polynomials \citep{shawe-taylor_kernel_2004}.

% In this article we consider the pure-power polynomial feature map \citep[Definition 1]{chen_parallelized_2018}. Assume that each input sample $\mat{x} \in \mathbb{C}^D$, then the pure-power polynomial feature map is defined as
% \begin{equation}\label{eq:TensorProductFeatureMap}
%     \mat{z}\left(\mat{x}\right) =  \bigotimes_{d=1}^D \mat{v}^{(d)}\left(x_d\right),
% \end{equation}
% where $\kron$ denotes the Kronecker product and $x_d$ denotes the $d$-th component of $\mat{x}$.
% and $\mat{v}^{(d)}\left(\cdot\right):\mathbb{C}\rightarrow\mathbb{C}^{M_d}$ is the Vandermonde vector
% \begin{equation}
%      \mat{v}^{(d)}\left(x_d\right) = \left[1,x_d,x_d^2,\ldots, x_d^{M_d-1}\right].
% \end{equation}

Due to the tensor-product structure in~\cref{eq:TensorProductFeatureMap}, $\mat{z}(\cdot)$ maps an input sample $\mat{x} \in \mathbb{C}^D$ into an exponentially large feature vector $\mat{z}(\mat{x}) \in \mathbb{C}^{M_1 M_2 \cdots M_D}$. As a result, the model is also described by an exponential number of weights $\mat{w}$. This exponential scaling in the number of features limits the use of tensor-product features to low-dimensional data or to mappings of very low degree.

Both these computational limitations can be sidestepped entirely by considering the \emph{dual} formulation of the learning problem in \cref{eq:EmpiricalRisk}, requiring to compute the pairwise similarity of all data respectively by means of a kernel function $k(\mat{x},\mat{x'})=\inner{\mat{z}(\mat{x})}{\mat{z}(\mat{x'})}$.
However, the dual formulation requires to instantiate the kernel matrix at a cost of \order{N^2} and to estimate $N$ Lagrange multipliers by solving a (convex) quadratic problem at a cost of at least \order{N^2}, prohibiting to tackle large-scale data (large $N$). To lift these limitations, a multitude of research has focused on finding low-rank approximations of kernels by considering \emph{random} methods such as polynomial sketching \citep{pham_fast_2013,woodruff_sketching_2014,meister_tight_2019} and random features \citep{williams_using_2001,rahimi_random_2007,le_fastfood_2013}, which approximate the feature space with probabilistic approximation guarantees.

One way to take advantage of the existing tensor-product structure in \cref{eq:TensorProductFeatureMap} is by imposing a tensor network \citep{kolda_tensor_2009,sidiropoulos_tensor_2017} constraint on the weights $\mat{w}$. For example, using a polyadic rank-$R$ constraint reduces the storage complexity of the weights from \order{M^D} down to \order{DMR} and enables the development of efficient learning algorithms with a computational complexity of \order{DMR} per gradient descent iteration. This idea has been explored for polynomial \citep{favier_parametric_2009,rendle_factorization_2010,blondel_higher-order_2016,blondel_multi-output_2017,batselier_tensor_2017-1} pure-power-$1$ polynomials \citep{novikov_exponential_2018}, pure-power polynomials of higher degree \citep{chen_parallelized_2018}, B-splines \citep{karagoz_nonlinear_2020}, and Fourier features \citep{wahls_learning_2014,stoudenmire_supervised_2016,efthymiou_tensornetwork_2019,kargas_supervised_2021,cheng_supervised_2021,wesel_large-scale_2021}. 

In this article, we improve on this entire line of research by deriving an exact \emph{quantized} representation \citep{khoromskij_odlog_2011} of pure-power polynomials and Fourier features, exploiting their inherent Vandermonde structure. It is worth noting that \emph{in this paper quantized means further tensorized}, and should not be confused with the practice of working with lower precision floating point numbers. 
By virtue of the derived quantized features, we are able to quantize the model weights. 
We show that compared to their non-quantized counterparts, quantized models can be trained with no additional computational cost, while learning from the same exact features.
Most importantly, for the same number of model parameters the ensuing quantized models are characterized by higher upper bounds on the VC-dimension, which indicates a potential higher expressiveness. While these bounds are in practice not necessarily met, we verify experimentally that:

\begin{enumerate}
    \item Quantized models are indeed characterized by higher expressiveness. This is demonstrated in \cref{sec:generalization}, where we show that in the underparameterized regime quantized models achiever lower test errors than the non-quantized models with identical features and identical total number of model parameters.
    \item This additional structure regularizes the problem by prioritizing the learning of the peaks in the frequency spectrum of the signal (in the case of Fourier features) (\cref{sec:regularization}). In other words, the quantized structure is learning the most salient features in the data first with its limited amount of available model parameters.
    \item Quantized tensor network models can provide state-of-the-art performance on large-scale real-life problems. This is demonstrated in \cref{sec:large-scale}, where we compare the proposed quantized model to both its non-quantized counterpart and other state-of-the-art methods, demonstrating superior generalization performance on a laptop computer.
\end{enumerate}

% In our numerical experiments, quantized models prove themselves to be superior in terms of test mean squared error over their non-quantized counterparts, and achieve state-of-the-art results on a large-scale regression task on a standard laptop.

% The remainder of the article is structured as follows. In the following section we provide a brief introduction to kernel machines and tensor networks. Next, we show how to further tensorize pure-power polynomials and Fourier features as a Kronecker product. We then discuss how this additional Kronecker structure in the features enables us to further tensorize and constrain the model weights as an underparametrized tensor network, and discuss the associated benefits in terms of model expressiveness and computational and storage efficiency. Finally we demonstrate our proposed approach in a series of numerical experiments. 

\section{BACKGROUND}

We denote scalars in both capital and non-capital italics $w,W$, vectors in non-capital bold $\mat{w}$, matrices in capital bold $\mat{W}$ and tensors, also known as higher-order arrays, in capital italic bold font $\ten{W}$. Sets are denoted with calligraphic capital letters, e.g. $\set{S}$. The $m$-th entry of a vector $\mat{w}\in\mathbb{C}^{M}$ is indicated as $w_m$ and the $m_1 m_2\ldots m_D$-th entry of a $D$-dimensional tensor $\ten{W}\in \mathbb{C}^{M_1 \times M_2 \times \cdots \times M_D}$ as $w_{m_1m_2\ldots m_D}$. We denote the complex-conjugate with superscript $\conj$ and $\kron$ denotes the \emph{left} Kronecker product \citep{cichocki_tensor_2016}. We employ zero-based indexing for all tensors.
The Frobenius inner product between two $D$-dimensional tensors $\ten{V},\ten{W}\in\mathbb{C}^{M_1 \times M_2 \times  \cdots \times M_D}$ is defined as
\begin{equation}\label{eq:inner}
    \innerfrob{\ten{V}}{\ten{W}} \coloneqq \sum_{m_1=0}^{M_1-1}\sum_{m_2=0}^{M_2-1}\cdots \sum_{m_D=0}^{M_D-1} v_{m_1 m_2 \ldots m_D}^* w_{m_1 m_2 \ldots m_D}.
\end{equation}

% \begin{flushleft}
We define the vectorization operator as $\vectorize{\cdot}:\mathbb{C}^{M_1\times M_2 \times \cdots \times M_D} \rightarrow \mathbb{C}^{M_1 M_2 \cdots M_D}$ such that
\begin{equation*}
    \vectorize{\ten{W}}_{m} = w_{m_1m_2\ldots m_D},
\end{equation*}
 with $m = m_1 + \sum_{d=2}^{D} m_d \prod_{k=1}^{d-1} M_k$. Likewise, its inverse, the tensorization operator $\tensorize{\cdot}: \mathbb{C}^{M_1 M_2 \cdots M_D}\rightarrow \mathbb{C}^{M_1\times M_2 \times \ldots M_D}$ is defined such that
\begin{equation*}
    \tensorize{\mat{w}}_{m_1 m_2 \cdots m_D} = w_m.
\end{equation*}
% \end{flushleft}

\subsection{Tensor Networks}

Tensor networks (TNs) \citep{kolda_tensor_2009,cichocki_era_2014,cichocki_tensor_2016,cichocki_tensor_2017} express a $D$-dimensional tensor ${\tensorize{\mat{w}}\eqqcolon\ten{W}}$ as a multi-linear function of $C$ \emph{core} tensors, see \cref{def:TensorNetwork} for a rigorous definition.
Two commonly used TNs are the canonical polyadic decomposition (CPD) and tensor train (TT).
\begin{definition}[Canonical polyadic decomposition \citep{hitchcock_expression_1927,kolda_tensor_2009}]\label{def:CPD}
A $D$-dimensional tensor $\ten{W}\in\mathbb{C}^{M_1\times M_2 \times \cdots \times M_D}$ has a rank-$R$ CPD if
\begin{equation*}\label{eq:CPD}
    w_{m_1m_2\dots m_D}= \sum_{r=0}^{R-1} \prod_{d=1}^D {w^{(d)}}_{m_d r}.
\end{equation*}
\end{definition}
The cores of this particular network are $C=D$ matrices {$\mat{W}^{(d)}\in\mathbb{C}^{M_d\times R}$}. The storage complexity $P=R\sum_{d=1}^D M_d$ of a rank-$R$ CPD is therefore \order{DMR}, where $M = \max (M_1, M_2,\ldots,M_D)$.

\begin{definition}[Tensor train \citep{oseledets_tensor-train_2011}]\label{def:TT}
A $D$-dimensional tensor $\ten{W}\in\mathbb{C}^{M_1 \times M_2 \times  \cdots \times  M_D}$ admits a rank-$(R_1\coloneqq1,R_2,\ldots,R_D,R_{D+1} \coloneqq R_1)$ tensor train if
\begin{equation*}
      w_{m_1m_2\ldots m_D} =  \sum_{r_1=0}^{R_1-1} \sum_{r_2=0}^{R_2-1} \cdots \sum_{r_D=0}^{R_d-1} \prod_{d=1}^D {w^{(d)}}_{r_{d} m_d r_{d+1}}.
\end{equation*}
\end{definition}
The cores of a tensor train are the $C=D$ $3$-dimensional tensors $\ten{W}^{(d)}\in\mathbb{C}^{R_{d}\times M \times R_{d+1}}$. The case $R_1 > 1$ is also called a tensor ring (TR)~\citep{zhao_tensor_2016}. 
% The values $R_2,\ldots,R_D$ are called the tensor train ranks and are upper bounded by $R_d = \min \ (\prod_{p=1}^{d-1} M_p, \prod_{p=d+1}^D M_p)$. 
Throughout the rest of this article we will simply refer to the tensor train rank as $R = \max (R_2, \cdots, R_{D})$. The storage complexity $P=\sum_{d=1}^D M_d R_d R_{d+1}$ of a tensor train is then \order{DMR^2}.
A TN is \emph{underparametrized} if $P\ll \prod_{d=1}^{D} M_d$, i.e. it can represent a tensor with fewer parameters than the number of entries of the tensor.

Other TNs are the Tucker decomposition~\citep{tucker_implications_1963-1,tucker_mathematical_1966}, hierarchical hierarchical Tucker \citep{hackbusch_new_2009,grasedyck_hierarchical_2010} decomposition, block-term decompositions \citep{de_lathauwer_decompositions_2008,de_lathauwer_decompositions_2008-1}, PEPS~\citep{verstraete_renormalization_2004} and MERA~\citep{evenbly_algorithms_2009}. 

% Underparametrized tensor networks have been widely used in machine learning, for instance in order to compress the weights in neural networks \citep{novikov_tensorizing_2015, tjandra_compressing_2017, yang_tensor-train_2017, khodak_initialization_2020}

\subsection{Tensorized Kernel Machines}
The tensor-product structure of features in~\cref{eq:TensorProductFeatureMap} can be exploited by imposing a tensor network structure onto the tensorized model weights
\begin{equation*}
    \tensorize{\mat{w}}.
\end{equation*}
Although generally speaking the tensorized  model weights are not full rank, modeling them as an underparametrized tensor network allows to compute fast model responses when the feature map $\mat{z}\left(\cdot\right)$ is of the form of \cref{eq:TensorProductFeatureMap}.
\begin{restatable}[Tensorized kernel machine (TKM)]{theorem}{nonQuantized}
\label{thm:CPD}
    Suppose $\tensorize{\mat{w}}$ is a tensor in CPD, TT or TR form. Then model responses and associated gradients
    \begin{equation*}
        f\left(\mat{x}\right) = \innerfrob{\bigotimes_{d=1}^D \mat{v}^{(d)}\left(x_d\right)}{\mat{w}},
    \end{equation*}
    can be computed in \order{P} instead of \order{\prod_{d=1}^D M_d}, where $P=DMR$ in case of CPD, and $P=DMR^2$ in case of TT or TR.
\end{restatable}
\begin{proof}
    See \cref{appendix:proofs_tkm}.
\end{proof}
Results for more general TNs can be found in \cref{appendix:proofs_tkm}.
This idea has been explored for a plethora of different combinations of tensor-product features and tensor networks \citep{wahls_learning_2014,stoudenmire_supervised_2016,novikov_exponential_2018,chen_parallelized_2018,cheng_supervised_2021,khavari_lower_2021,wesel_large-scale_2021}.
A graphical depiction of a TKM can be found in \cref{fig:TKM}: a full line denotes a summation along the corresponding index, while a dotted line denotes a Kronecker product.
Training a kernel machine under such constraint yields the following nonconvex optimization problem:
\begin{align}\label{eq:TensorKernelMachines}    & \min_{\mat{w}} \  \frac{1}{N}\sum_{n=1}^N \ell (\innerfrob{\bigotimes_{d=1}^D\mat{v}^{(d)}\left(x_d\right)}{\mat{w}},y_n) + \lambda \left\vert\left\vert \mat{w} \right\vert \right\vert^2, \\
\nonumber    & \text{s.t.} \  \tensorize{\mat{w}} \text{ is a tensor network}.
\end{align}
 Common choices of tensor network-specific optimizers are the alternating linear scheme (ALS)~\citep{comon_tensor_2009,kolda_tensor_2009,uschmajew_local_2012,holtz_alternating_2012}, the density matrix renormalization Group (DMRG)~\citep{white_density_1992} and Riemannian optimization~\citep{novikov_exponential_2018,novikov_automatic_2021}.
Generic first or second order gradient-based optimization method can also be employed.

\section{QUANTIZING POLYNOMIAL AND FOURIER FEATURES}
Before presenting the main contribution of this article, we first provide the definition of a pure-power polynomial feature map.

\begin{figure*}
    %\captionsetup{font=normalsize,labelfont={bf,sf}}
    %\captionsetup[sub]{font=small,labelfont={bf,sf},format=hang,justification=raggedright}
    \hfill
    \centering
    \begin{subcaptionblock}[T][][c]{.33\textwidth}
    \centering
     \begin{tikzpicture}
    % Phi
    \SetVertexStyle[MinSize=1.1\DefaultUnit]
    \Vertex[x=0,y=0,shape=circle,label=$\mat{v}^{(1)}$,fontsize=\normalsize,RGB,color={0,114,189}]{phi1}
    \Vertex[x=1.5,y=0,shape=circle,label=$\mat{v}^{(2)}$,fontsize=\normalsize,RGB,color={217,83,25}]{phi2}
    \Vertex[x=3,y=0,shape=circle,label=$\mat{v}^{(3)}$,fontsize=\normalsize,RGB,color={237,177,32}]{phi3}
    % w
    \Vertex[x=0.0,y=-1.5,shape=rectangle,label=$\ten{W}^{(1)}$,fontsize=\normalsize,RGB,color={255,255,255}]{w1}
    \Vertex[x=1.5,y=-1.5,shape=rectangle,label=$\ten{W}^{(2)}$,fontsize=\normalsize,RGB,color={255,255,255}]{w2}
    \Vertex[x=3,y=-1.5,shape=rectangle,label=$\ten{W}^{(3)}$,fontsize=\normalsize,RGB,color={255,255,255}]{w3}
    % Contractions
    \Edge[label=$M_1$,position=right](phi1)(w1)
    \Edge[label=$M_2$,position=right](phi2)(w2)
    \Edge[label=$M_3$,position=right](phi3)(w3)
    % CPD contractions
    \Edge[label=$R_2$,position=below](w1)(w2)
    \Edge[label=$R_3$,position=below](w2)(w3)
    % Rank-1 contractions
    \Edge[style=dashed](phi1)(phi2)
    \Edge[style=dashed](phi2)(phi3)
\end{tikzpicture}
    \caption{TKM with TT-constrained weights.}
    \label{fig:TKM}
    \end{subcaptionblock}%
    % \hfill
    \begin{subcaptionblock}[T][][c]{.66\textwidth}
    \centering
    \begin{tikzpicture}
    \SetVertexStyle[MinSize=1.1\DefaultUnit]
    % Phi
    \Vertex[x=00.0,y=0,shape=circle,label=$\mat{s}^{(1,1)}$,fontsize=\normalsize,RGB,color={0,114,189}]{phi11}
    \Vertex[x=01.5,y=0,shape=circle,label=$\mat{s}^{(1,2)}$,fontsize=\normalsize,RGB,color={0,114,189}]{phi12}
    \Vertex[x=03.0,y=0,shape=circle,label=$\mat{s}^{(1,3)}$,fontsize=\normalsize,RGB,color={0,114,189}]{phi13}
    \Vertex[x=04.5,y=0,shape=circle,label=$\mat{s}^{(2,1)}$,fontsize=\normalsize,RGB,color={217,83,25}]{phi21}
    \Vertex[x=06.0,y=0,shape=circle,label=$\mat{s}^{(3,1)}$,fontsize=\normalsize,RGB,color={237,177,32}]{phi31}
    \Vertex[x=07.5,y=0,shape=circle,label=$\mat{s}^{(3,2)}$,fontsize=\normalsize,RGB,color={237,177,32}]{phi32}

    % w
    \Vertex[x=00.0,y=-1.5,shape=rectangle,label=$\ten{W}^{(1,1)}$,fontsize=\normalsize,RGB,color={255,255,255}]{w11}
    \Vertex[x=01.5,y=-1.5,shape=rectangle,label=$\ten{W}^{(1,2)}$,fontsize=\normalsize,RGB,color={255,255,255}]{w12}
    \Vertex[x=03.0,y=-1.5,shape=rectangle,label=$\ten{W}^{(1,3)}$,fontsize=\normalsize,RGB,color={255,255,255}]{w13}
    \Vertex[x=04.5,y=-1.5,shape=rectangle,label=$\ten{W}^{(2,1)}$,fontsize=\normalsize,RGB,color={255,255,255}]{w21}
    \Vertex[x=06.0,y=-1.5,shape=rectangle,label=$\ten{W}^{(3,1)}$,fontsize=\normalsize,RGB,color={255,255,255}]{w31}
    \Vertex[x=07.5,y=-1.5,shape=rectangle,label=$\ten{W}^{(3,2)}$,fontsize=\normalsize,RGB,color={255,255,255}]{w32}
    
    % Contractions
    \Edge[label=$Q$,position=right](phi11)(w11)
    \Edge[label=$Q$,position=right](phi12)(w12)
    \Edge[label=$Q$,position=right](phi13)(w13)
    \Edge[label=$Q$,position=right](phi21)(w21)
    \Edge[label=$Q$,position=right](phi31)(w31)
    \Edge[label=$Q$,position=right](phi32)(w32)
    % TT contractions
    \Edge[label=$R_2$,position=below](w11)(w12)
    \Edge[label=$R_3$,position=below](w12)(w13)
    \Edge[label=$R_4$,position=below](w13)(w21)
    \Edge[label=$R_5$,position=below](w21)(w31)
    \Edge[label=$R_6$,position=below](w31)(w32)
    % Rank-1 contractions
    \Edge[style=dashed](phi11)(phi12)
    \Edge[style=dashed](phi12)(phi13)
    \Edge[style=dashed](phi13)(phi21)
    \Edge[style=dashed](phi21)(phi31)
    \Edge[style=dashed](phi31)(phi32)
\end{tikzpicture}
    \caption{Corresponding QTKM with $Q$-quantized TT-constrained weights.}
    \label{fig:QTKM}
    \end{subcaptionblock}%
    \hfill
    \caption{TKM (\cref{fig:TKM}), and QTKM (\cref{fig:QTKM}) with TT-constrained model weights. In these diagrams, each circle represent a vector which constitutes the pure-power feature map of \cref{eq:PurePowerFeatureMap}, and each square represents a tensor train core (\cref{def:TT}). The color coding relates the $d$-th feature with its quantized representation. A full connecting line denotes a summation along the corresponding index, while a dotted line denotes a Kronecker product, see \citet{cichocki_tensor_2016} for a more in-depth explanation. \Cref{fig:QTKM} depicts the case where ${K_1=Q^3}$, $K_2 = Q$ and $K_3 = Q^2$.
   Notice how quantization allows to model correlations within each particular mode of the model weights, in this case explicitly by means of the tensor train ranks $(1,R_2,\ldots,R_6,1)$.}
    \label{fig:TKMs}
\end{figure*}

\begin{definition}[Pure-power polynomial feature map~{\citep{chen_parallelized_2018}}]\label{def:pureMfeature}
For an input sample $\mat{x} \in \mathbb{C}^D$, the pure-power polynomial features $\mat{z}(\cdot):\mathbb{C}^D \rightarrow \mathbb{C}^{M_1 M_2 \cdots M_D}$ of degree $\left(M_1-1,M_2-1,\ldots,M_D-1\right)$ are defined as
\begin{equation*}\label{eq:PurePowerFeatureMap}
    \mat{z}\left(\mat{x}\right) =  \bigotimes_{d=1}^D \mat{v}^{(d)}\left(x_d\right),
\end{equation*}
with
$\mat{v}^{(d)}\left(\cdot\right):\mathbb{C}\rightarrow\mathbb{C}^{M_d}$ the Vandermonde vector
\begin{equation*}\label{eq:VandermondeVector}
     \mat{v}^{(d)}\left(x_d\right) = \left[1,x_d,x_d^2,\ldots, x_d^{M_d-1}\right].
\end{equation*}
The $m_d$-th element of the feature map vector $\mat{v}^{(d)}(x_d)$ is
\begin{equation*}
    {v^{(d)}(x_d)}_{m_d} = (x_d)^{m_d}, \quad m_d =0,1,\ldots,M_d -1.
\end{equation*}
\end{definition}
 The definition of the feature map is given for degree $\left(M_1-1,M_2-1,\ldots,M_D-1\right)$ such that the feature map vector $z(\mat{x})$ has a length $M_1 M_2 \cdots M_D$. 
 The Kronecker product in \cref{def:pureMfeature} ensures that all possible combinations of products of monomial basis functions are computed, up to a total degree of $\sum_{d=1}^D (M_d-1)$.
Compared to the more common affine polynomials, which are basis functions of the polynomial kernel $k(\mat{x},\mat{x'})= (b+\inner{\mat{x}}{\mat{x'}})^M$, pure-power  polynomial features contain more higher-order terms.
Similarly, their use is justified by the Stone-Weierstrass theorem \citep{de_branges_stone-weierstrass_1959}, which guarantees that any continuous function on a locally compact domain can be approximated arbitrarily well by polynomials of increasing degree.
Fourier features can be similarly defined by replacing the monomials with complex exponentials.
\begin{definition}(Fourier Features)\label{def:FourierFeatureMap}
For an input sample {$\mat{x} \in \mathbb{C}^D$}, the Fourier feature map $\mat{\varphi}(\cdot):\mathbb{C}^D \rightarrow \mathbb{C}^{M_1 M_2 \cdots M_D}$ with $M_d$ basis frequencies {$-\nicefrac{M_d}{2},\ldots,\nicefrac{M_d}{2}-1$} per dimension is defined as
\begin{equation*}
    \mat{\varphi}\left(\mat{x}\right) =  \bigotimes_{d=1}^D \left( c_d  \,  \mat{v}^{(d)}\left(e^{-\frac{2\pi \, j \, x_d}{L}}\right) \right),
\end{equation*}
where $j$ is the imaginary unit, $c_d = e^{2\pi \, j\,x_d \, \frac{2+M_d}{2L}}\in\mathbb{C}$, $L\in\mathbb{R}$ is the periodicity of the function class and $\mat{v}^{(d)}\left(\cdot\right)$ are the Vandermonde vectors of \cref{eq:PurePowerFeatureMap}.
\end{definition}
Fourier features are ubiquitous in the field of kernel machines as they are eigenfunctions of $D$-dimensional stationary product kernels with respect to the Lebesgue measure, see \citep[Chapter 4.3]{rasmussen_gaussian_2006} or \citep{hensman_variational_2017,solin_hilbert_2020}. As such they are often used for the uniform approximation of such kernels in the limit of $L\rightarrow \infty$ and $M_1,M_2,\ldots,M_D\rightarrow\infty$ \citep[Proposition 1]{wahls_learning_2014}. 

We now present the first contribution of this article, which is an exact \emph{quantized}, i.e. further tensorized, representation of pure-power polynomials and Fourier features. These quantized features allows for the quantization of the model weights, which enables to impose additional tensor network structure between features, yielding more expressive models for the same number of model parameters.

\subsection{Quantized Features}

In order to quantize pure-power polynomial features we assume for ease of notation that $M_d$ can be written as some power ${M_d = Q^{K_d}}$, where both $Q,K_d\in\mathbb{N}$. The more general case involves considering the (prime) factorization of $M_d$ and follows the same derivation steps albeit with more intricate notation.
\begin{definition}[Quantized Vandermonde vector]\label{def:quantpure}
For $Q,k\in\mathbb{N}$, we define the quantized Vandermonde vector $\mat{s}^{(d,k)}(\cdot):\mathbb{C}\rightarrow \mathbb{C}^{Q}$ as
    \begin{equation*}
        \mat{s}^{(d,k)}\left(x_d\right)\coloneqq \left[1 , \, x_d^{{Q}^{k-1}},\ldots,x_d^{(Q-1)Q^{k-1}} \right].
    \end{equation*}
The $q$-th element of $\mat{s}^{(d,k)}(x_d)$ is therefore
\begin{equation*}
    {s^{(d,k)}(x_d)}_{q} = (x_d)^{q Q^{k-1}}, \quad q =0,1,\ldots,Q-1.
\end{equation*}
\end{definition}

\begin{theorem}[Quantized pure-power-$(M_d-1)$ polynomial feature map]\label{theo:quantpure}
Each Vandermonde vector $\mat{v}^{(d)}(x_d)$ can be expressed as a Kronecker product of $K_d$ factors
\begin{equation*}    
 \mat{v}^{(d)}(x_d)  = \bigotimes_{k=1}^{K_d} \mat{s}^{(d,k)}\left(x_d\right),
\end{equation*}
where $M_d = Q^{K_d}$.
\end{theorem}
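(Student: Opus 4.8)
The plan is to prove the identity entrywise: I would show that for every linear index $m \in \{0,1,\ldots,M_d-1\}$ the $m$-th component of $\mat{v}^{(d)}(x_d)$ equals the $m$-th component of $\bigotimes_{k=1}^{K_d} \mat{s}^{(d,k)}(x_d)$, and then conclude equality of the two vectors. The arithmetic heart of the argument is that, because $M_d = Q^{K_d}$, every such $m$ has a unique base-$Q$ representation with exactly $K_d$ digits, $m = \sum_{k=1}^{K_d} q_k \, Q^{\,k-1}$ with $q_k \in \{0,1,\ldots,Q-1\}$; this mixed-radix expansion is precisely the multi-index that the indexing convention underlying \cref{eq:inner} associates to the linear position $m$ when all mode sizes equal $Q$.

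First I would expand the left-hand side. By \cref{def:pureMfeature} the $m$-th entry of $\mat{v}^{(d)}(x_d)$ is $x_d^{m}$. Substituting the base-$Q$ expansion of $m$ and using $x_d^{a+b} = x_d^{a}x_d^{b}$ gives
\begin{equation*}
  x_d^{m}
  = x_d^{\sum_{k=1}^{K_d} q_k Q^{k-1}}
  = \prod_{k=1}^{K_d} x_d^{\, q_k Q^{k-1}}
  = \prod_{k=1}^{K_d} {s^{(d,k)}(x_d)}_{q_k},
\end{equation*}
where the last equality is the definition of the quantized Vandermonde vector in \cref{def:quantpure}. Next I would expand the right-hand side: by the definition of the Kronecker product together with the paper's indexing convention, the $m$-th entry of $\bigotimes_{k=1}^{K_d} \mat{s}^{(d,k)}(x_d)$ is the product $\prod_{k=1}^{K_d} {s^{(d,k)}(x_d)}_{q_k}$, where $(q_1,\ldots,q_{K_d})$ is exactly the multi-index attached to $m$, i.e. its base-$Q$ digits. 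Comparing the two expressions shows the $m$-th entries agree for all $m$, which proves the claim. An equivalent route is induction on $K_d$: the base case $K_d = 1$ is immediate since $\mat{s}^{(d,1)}(x_d) = [1,x_d,\ldots,x_d^{Q-1}]$ is the Vandermonde vector of length $Q$, and the inductive step peels off one quantized factor by partitioning $\{0,\ldots,Q^{K_d}-1\}$ into $Q$ consecutive blocks of length $Q^{K_d-1}$, on each of which the vector is the previous (length-$Q^{K_d-1}$) Vandermonde vector scaled by a single power $x_d^{\,q\,Q^{K_d-1}}$.

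I expect the only genuine obstacle to be bookkeeping rather than mathematics: one must make sure that the multi-index the Kronecker product attaches to the linear position $m$ is read in the same order (least-significant digit first, consistent with \cref{eq:inner}) as in the mixed-radix expansion above, so that the exponent $q_k Q^{k-1}$ is paired with the correct factor $\mat{s}^{(d,k)}$. Once this index convention is pinned down, the rest is the one-line computation displayed above.
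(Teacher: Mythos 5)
Your proposal is correct and matches the paper's own argument: the paper also proves the identity entrywise, tensorizing $\mat{v}^{(d)}(x_d)$ into $K_d$ modes of size $Q$, writing the linear index in base $Q$ as $q_1+q_2Q+\cdots+q_{K_d}Q^{K_d-1}$, splitting the power $x_d^{m_d}$ accordingly, and recognizing each factor as an entry of $\mat{s}^{(d,k)}(x_d)$ via \cref{def:quantpure}. Your explicit attention to matching the Kronecker-product index ordering with the paper's vectorization convention is exactly the bookkeeping the paper handles implicitly through its $\operatorname{\ff{ten}}$ operator, so no gap remains.
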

\begin{proof}
From \cref{def:pureMfeature} we have that 
\begin{equation*}
{v^{(d)}(x_d)}_{m_d} = (x_d)^{m_d}.
\end{equation*}
Assume that $M_d = Q^{K_d}$. We proceed by tensorizing $\mat{v}^{(d)}(x_d)$ along $K_d$ dimensions, each having size $Q$. Then
\begin{align*}
    {v^{(d)}(x_d)}_{m_d} & = \tensorizeQuantized{v^{(d)}}_{q_1 q_2 \ldots q_{K_d}} \\
    % &= (x_d)^{q_1 + q_2\,Q + q_3\,Q^2 + \cdots + q_{K_d}\,Q^{K_d-1}} \\
    % &= (x_d)^{q_1} \; (x_d)^{q_2\,Q} \; (x_d)^{q_3\,Q^2} \cdots \; (x_d)^{q_{K_d}\,Q^{K_d-1}} \\
    % &= {s^{(d,1)}}_{q_1} \; {s^{(d,2)}}_{q_2} \; {s^{(d,3)}}_{q_3} \; \cdots \; {s^{(d,K_d)}}_{q_{K_d}}.
    &= (x_d)^{\sum_{k=1}^{K_d} q_{k} Q^{k-1}} \\
    &= \prod_{k=1}^{K_d} (x_d)^{q_{k}\,Q^{k-1}} \\
    &= \prod_{k=1}^{K_d} {s^{(d,k)}}(x_d)_{q_{k}}.
\end{align*}
The last equality follows directly from \cref{def:quantpure}. Hence by the definition of Kronecker product, we have that
\begin{equation*}    
 \mat{v}^{(d)}(x_d)  = \bigotimes_{k=1}^{K_d} \mat{s}^{(d,k)}\left(x_d\right).
\end{equation*}
\end{proof}
Note once more that in principle it is possible to tensorize with respect to $K_d$ indices such that $M_d = Q_1 Q_2 \cdots Q_{K_d}$, but we restrain from doing so not to needlessly complicate notation. \Cref{theo:quantpure} allows then to quantize pure-power and Fourier features.
\begin{corollary}[Quantized pure-power polynomials]\label{cor:poly}
    For an input sample $\mat{x}\in\mathbb{C}^{D}$, the pure-power polynomial feature map can be expressed as
    \begin{equation*}
        \mat{z}\left(\mat{x}\right) = \bigotimes_{d=1}^D \bigotimes_{k=1}^{K_d} \mat{s}^{(d,k)}\left(x_d\right).
    \end{equation*}
\end{corollary}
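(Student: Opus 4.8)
The plan is to obtain \Cref{cor:poly} directly by substituting the quantization of \Cref{theo:quantpure} into the definition of the pure-power polynomial feature map. First I would recall from \Cref{def:pureMfeature} that
\[
    \mat{z}\left(\mat{x}\right) = \bigotimes_{d=1}^D \mat{v}^{(d)}\left(x_d\right),
\]
so that the feature map is already expressed as a Kronecker product of the per-dimension Vandermonde vectors $\mat{v}^{(d)}(x_d)$.

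Next, under the standing assumption that each $M_d$ can be written as $M_d = Q^{K_d}$ with $Q, K_d \in \mathbb{N}$, I would invoke \Cref{theo:quantpure} once for each $d \in \{1, \ldots, D\}$ to rewrite every factor as
\[
    \mat{v}^{(d)}\left(x_d\right) = \bigotimes_{k=1}^{K_d} \mat{s}^{(d,k)}\left(x_d\right).
\]
Substituting these $D$ identities into the expression for $\mat{z}(\mat{x})$ and using the associativity of the Kronecker product to collapse the resulting nested products into a single iterated Kronecker product then yields
\[
    \mat{z}\left(\mat{x}\right) = \bigotimes_{d=1}^D \bigotimes_{k=1}^{K_d} \mat{s}^{(d,k)}\left(x_d\right),
\]
which is precisely the claimed statement.

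Since the argument is essentially a one-line substitution, there is no genuine analytical obstacle; the only point that requires a little care is index bookkeeping. One should verify that the ordering of the $K_d$ quantized factors $\mat{s}^{(d,1)}, \ldots, \mat{s}^{(d,K_d)}$ within each dimension block, together with the ordering of the $D$ blocks, is consistent with the little-endian index convention fixed in \Cref{theo:quantpure} (where $m_d = q_1 + q_2 Q + \cdots + q_{K_d} Q^{K_d-1}$) and with the vectorization convention adopted earlier, so that the flattened global index of $\mat{z}(\mat{x})$ still agrees with $m = m_1 + \sum_{d=2}^D m_d \prod_{k=1}^{d-1} M_k$. Once this convention is pinned down, the identity holds entrywise by the same monomial-factorization computation used in the proof of \Cref{theo:quantpure}, applied coordinate-wise across all $D$ dimensions simultaneously.
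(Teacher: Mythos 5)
Your proposal is correct and matches the paper's intent exactly: the paper gives no separate proof for \Cref{cor:poly}, treating it as an immediate consequence of applying \Cref{theo:quantpure} to each factor $\mat{v}^{(d)}(x_d)$ in the definition of $\mat{z}(\mat{x})$ from \Cref{def:pureMfeature}. Your extra remark on checking the index ordering against the tensorization convention is sound care but not a different argument.
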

\begin{corollary}[Quantized Fourier feature map]\label{cor:FF}
For an input sample $\mat{x}\in\mathbb{C}^{D}$, the Fourier feature map can be expressed as\begin{equation*}
 \mat{\varphi}(\mat{x})  = \bigotimes_{d=1}^D \bigotimes_{k=1}^{K_d}  c_d^{\frac{1}{K_d}} \mat{s}^{(d,k)}\left(e^{-\frac{2\pi j x_d}{L}}\right),
\end{equation*}
where $c_d = e^{2\pi \, j\,x_d \, \frac{2+M_d}{2L}}$.
\end{corollary}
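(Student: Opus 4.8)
The plan is to obtain the claim as an almost immediate consequence of \cref{theo:quantpure}, the only extra ingredient being the redistribution of the per-dimension scalar $c_d$ across the quantized Kronecker factors via multilinearity of the Kronecker product.

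First I would start from \cref{def:FourierFeatureMap}, which gives
\begin{equation*}
    \mat{\varphi}\left(\mat{x}\right) = \bigotimes_{d=1}^D \left( c_d \, \mat{v}^{(d)}\left(e^{-\frac{2\pi j x_d}{L}}\right) \right),
\end{equation*}
and apply \cref{theo:quantpure} with the Vandermonde argument specialized to $e^{-2\pi j x_d / L}$; the statement of \cref{theo:quantpure} holds for any complex input, in particular for a point on the unit circle. Since $M_d = Q^{K_d}$ by assumption, this yields, for each $d$, the factorization $\mat{v}^{(d)}(e^{-2\pi j x_d/L}) = \bigotimes_{k=1}^{K_d} \mat{s}^{(d,k)}(e^{-2\pi j x_d/L})$.

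Next I would push the scalar $c_d$ inside this inner Kronecker product. Fixing any $K_d$-th root $c_d^{1/K_d}$ of $c_d$, we have $\prod_{k=1}^{K_d} c_d^{1/K_d} = c_d$, and, the Kronecker product being linear in each of its arguments, one obtains
\begin{equation*}
    c_d \, \mat{v}^{(d)}\left(e^{-\frac{2\pi j x_d}{L}}\right) = \bigotimes_{k=1}^{K_d} c_d^{\frac{1}{K_d}}\, \mat{s}^{(d,k)}\left(e^{-\frac{2\pi j x_d}{L}}\right).
\end{equation*}
Substituting this into the outer product over $d$ and using associativity of the Kronecker product to merge the two nested products produces exactly the asserted expression for $\mat{\varphi}(\mat{x})$, with $c_d = e^{2\pi j x_d (2+M_d)/(2L)}$ as in \cref{def:FourierFeatureMap}.

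The only point requiring a word of care is that $c_d^{1/K_d}$ is a multivalued complex expression: the identity should be read with a fixed branch (e.g.\ the principal one), and in fact the choice is immaterial since only the $K_d$-fold product of these roots enters the computation, and that product equals $c_d$ for any consistent choice of root. Beyond this, the argument is pure bookkeeping with Kronecker products layered on top of the already-established \cref{theo:quantpure}, so I do not anticipate a genuine obstacle; the only subtlety is notational.
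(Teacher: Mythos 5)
Your proposal is correct and follows the same route the paper intends: the corollary is stated as an immediate consequence of \cref{theo:quantpure} applied to the Vandermonde vector evaluated at $e^{-2\pi j x_d/L}$, with the scalar $c_d$ distributed over the $K_d$ Kronecker factors via $\prod_{k=1}^{K_d} c_d^{1/K_d} = c_d$. Your remark on fixing a branch of $c_d^{1/K_d}$ is a harmless extra precaution (one may simply take $e^{2\pi j x_d (2+M_d)/(2LK_d)}$), and nothing further is needed.
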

Note that when quantized, both pure-power and Fourier features admit an efficient storage complexity of \order{D K} = \order{D\log M} instead of \order{D M}, where $K= \max(K_1,\ldots,K_D)$.
\begin{example}
Consider $D=2$, $M_1=8=2^3$ $M_2=4=2^2$, then the Vandermonde vector of monomials up to total degree $10$ is constructed from
\begin{align*}
    \mat{z}(\mat{x}) &= \left[1,\,x_1 \right] \kron \left[1,\,x_1^2 \right] \kron \left[1,\ x_1^4 \right] \kron \left[1,\,x_2 \right] \kron \left[1,\,x_2^2 \right].
\end{align*}
\end{example}

We now present the second contribution of this article, which is the quantization of the model weights associated with quantized polynomial and Fourier features. As we will see, these quantized models are more expressive given the number of model parameters and same exact features.

\section{QUANTIZED TENSOR NETWORK KERNEL MACHINES}\label{section:quantizedModel}

When not considering quantization, model weights allow for tensorial indexing along the $D$ dimensions of the inputs, i.e. $\tensorize{\mat{w}}$. \Cref{cor:poly} and \cref{cor:FF} allow to exploit the Kronecker product structure of pure-power polynomial and Fourier features by further tensorizing the model weights of the tensor network-constrained kernel machines of \cref{eq:TensorKernelMachines}
\begin{equation*}
    \ff{ten}(\mat{w}, \underbrace{Q,Q,\ldots,Q}_{\sum_{d=1}^D K_d \ \text{times}}).
\end{equation*}
% As shown in \cref{fig:QTKM}, 
These further factorized model weights can then be constrained to be a tensor network, and learned by minimizing the empirical risk in the framework of \cref{eq:TensorKernelMachines}.
Training a kernel machine under this constraint results in the following nonlinear optimization problem:
\begin{align}\label{eq:TensorKernelMachinesQuantized}    & \min_{\mat{w}} \  \frac{1}{N}\sum_{n=1}^N \ell (\innerfrob{ \bigotimes_{d=1}^D \bigotimes_{k=1}^{K_d} \mat{s}^{(d,k)}\left(x_d\right)}{\mat{w}},y_n) + \lambda \left\vert\left\vert \mat{w} \right\vert \right\vert^2, \\
\nonumber    & \text{s.t.} \  \tensorizeQuantized{\mat{w}} \text{ is a tensor network}.
\end{align}

\subsection{Computational Complexity}

In case of CPD, TT or TR-constrained and quantized model weights, model responses and associated gradients can be computed at the same cost as with non-quantized models:
\begin{restatable}[Quantized tensorized kernel machine (QTKM)]{theorem}{quantized}\label{thm:CPDQuantized}
    Consider pure-power and Fourier feature maps factorized as in \cref{cor:poly} and \cref{cor:FF} and suppose $\tensorizeQuantized{\mat{w}}$ is a tensor in CPD, TT or TR form.
    Then by \cref{thm:CPD}, model responses and associated gradients
    \begin{equation*}
        f_{\text{quantized}}\left(\mat{x}\right) = \innerfrob{\bigotimes_{d=1}^D \bigotimes_{k=1}^{K_d} \mat{s}^{(d,k)}\left(x_d\right)}{\mat{w}},
    \end{equation*}
    can be computed in \order{P} instead of \order{\prod_{d=1}^D M_d}, where $P=KDQR$ in case of CPD, and $P= KDQR^2$ in case of TT or TR.
\end{restatable}
\begin{proof}
    See \cref{appendix:proofs_qtkm}.
\end{proof}
Results for more general TNs can be found in \cref{appendix:proofs_qtkm}.
A graphical depiction of a QTKM can be found in \cref{fig:QTKM}.
Furthermore, when considering tensor network-specific optimization algorithms, the time complexity per iteration of training when optimizing \cref{eq:TensorKernelMachinesQuantized} is lower compared to \cref{eq:TensorKernelMachines}, as these methods typically optimize over a subset (typically one core) of model parameters, see \cref{fasterALS}.

\subsection{Increased Model Expressiveness}

\begin{figure*}[h]
    %\captionsetup{font=normalsize,labelfont={bf,sf}}
    %\captionsetup[sub]{font=small,labelfont={bf,sf}}
    \centering
    \begin{subcaptionblock}[T][][c]{.25\textwidth}
    \centering
    \includegraphics[width=\textwidth]{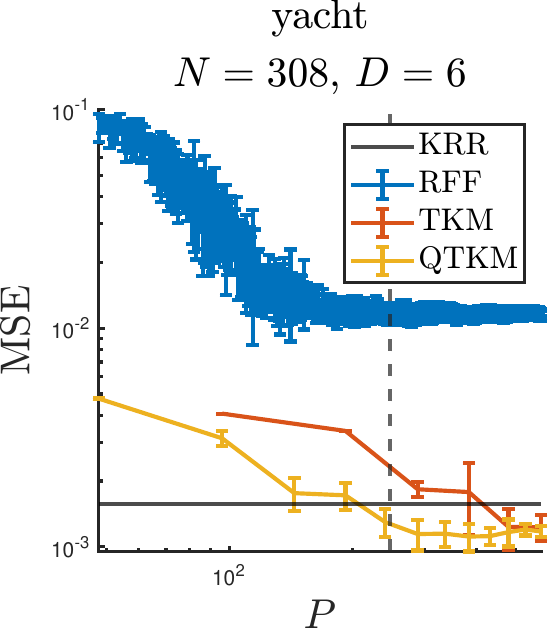}
    \end{subcaptionblock}%
    \begin{subcaptionblock}[T][][c]{.25\textwidth}
    \centering
    \includegraphics[width=\textwidth]{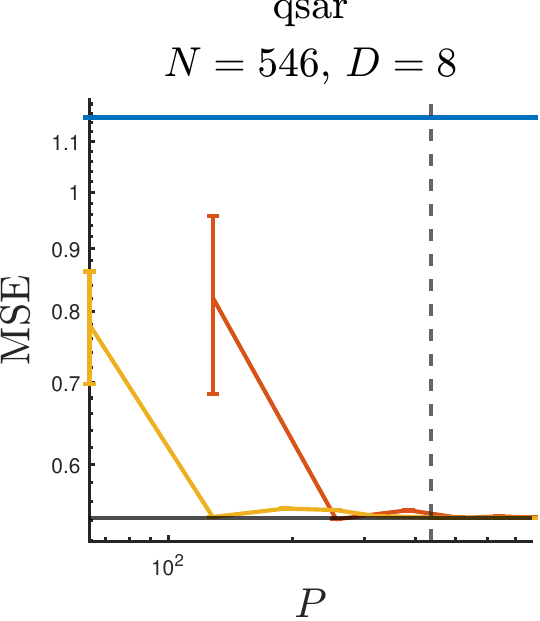}
    \end{subcaptionblock}%
    \begin{subcaptionblock}[T][][c]{.25\textwidth}
    \centering
    \includegraphics[width=\textwidth]{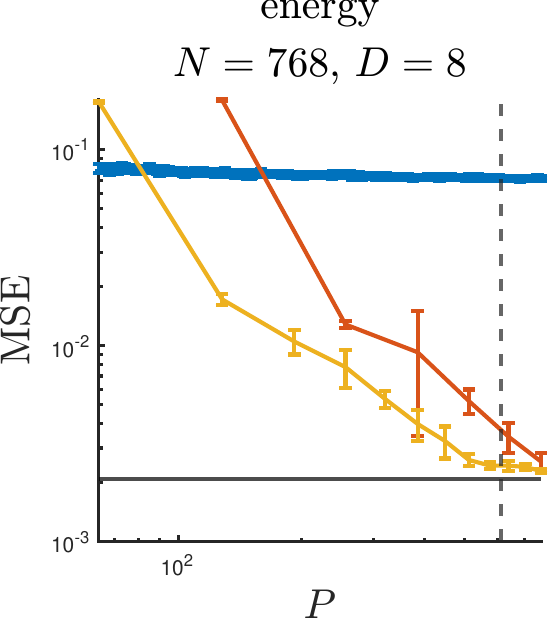}
    \end{subcaptionblock}%
    \begin{subcaptionblock}[T][][c]{.25\textwidth}
    \centering
    \includegraphics[width=\textwidth]{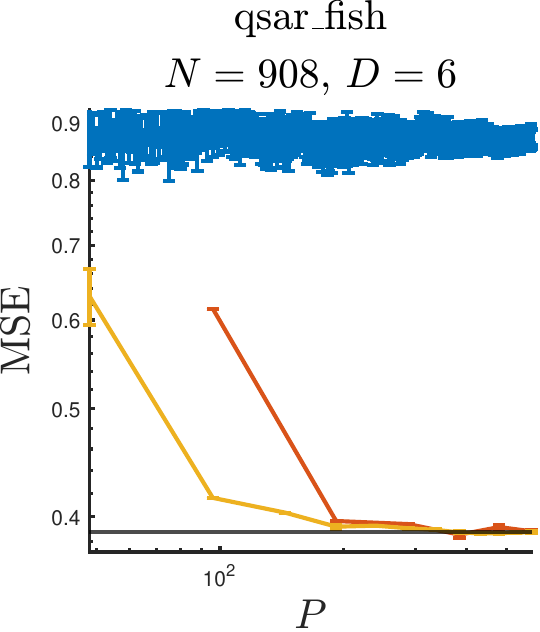}
    \end{subcaptionblock}\newline
    \begin{subcaptionblock}[T][][c]{.25\textwidth}
    \centering
    \includegraphics[width=\textwidth]{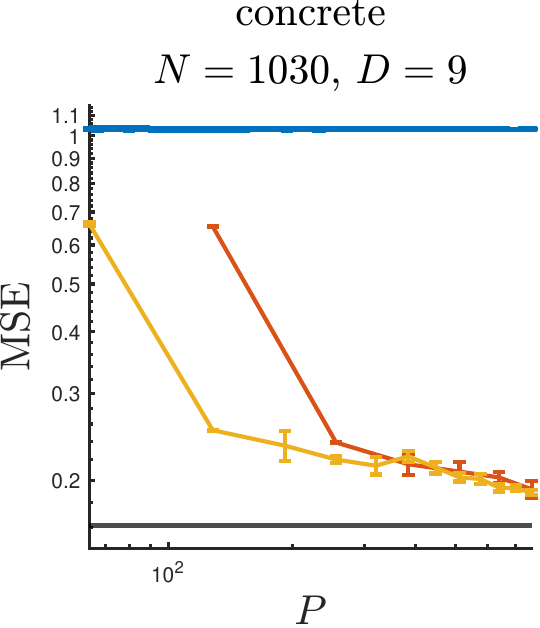}
    \end{subcaptionblock}%
    \begin{subcaptionblock}[T][][c]{.25\textwidth}
    \centering
    \includegraphics[width=\textwidth]{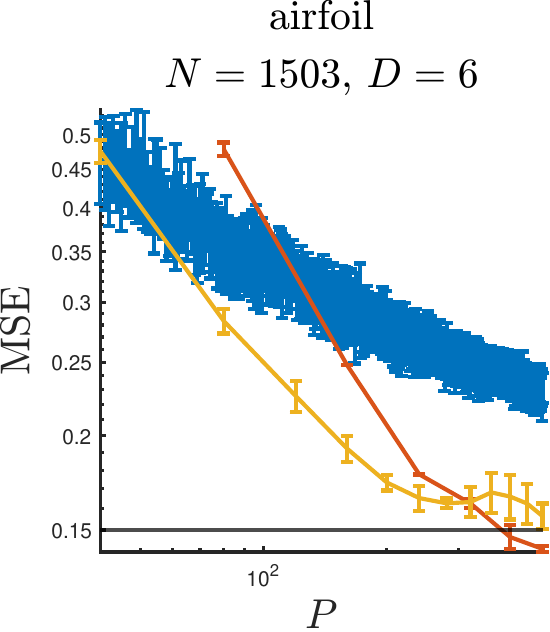}
    \end{subcaptionblock}%
    \begin{subcaptionblock}[T][][c]{.25\textwidth}
    \centering
    \includegraphics[width=\textwidth]{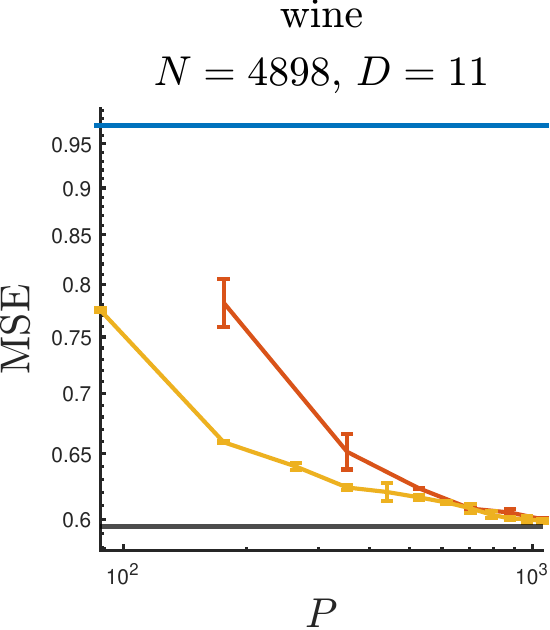}
    \end{subcaptionblock}%
    \begin{subcaptionblock}[T][][c]{.25\textwidth}
    \centering
    \includegraphics[width=\textwidth]{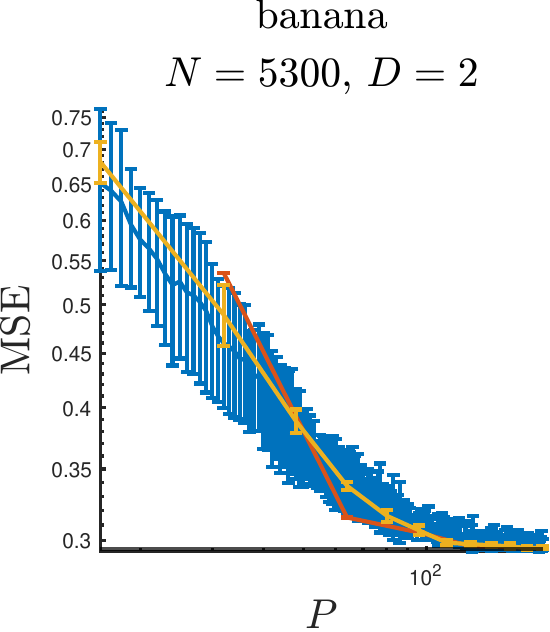}
    \end{subcaptionblock}%
    \caption{Plots of the test mean squared error as a function of the number of model parameters $P$, for different real-life datasets. In blue, random Fourier features \citep{rahimi_random_2007}, in red tensorized kernel machines with Fourier features \citep{wahls_learning_2014,stoudenmire_supervised_2016,kargas_supervised_2021,wesel_large-scale_2021}, in yellow quantized kernel machines with Fourier features, with quantization $Q=2$. The gray horizontal full line is the full unconstrained optimization problem, which corresponds to kernel ridge regression (KRR). The grey vertical dotted line is set at $P=N$. It can be seen that for $P<N$ case, quantization allows to achieve better generalization performance with respect to the non-quantized case.} 
    \label{fig:generalization}
\end{figure*}

Constraining a tensor to be a tensor network allows to distill the most salient characteristics of the data in terms of an limited number of effective parameters without destroying its multi-modal nature. This is also known as the \emph{blessing of dimensionality} \citep{cichocki_era_2014} and is the general underlying concept behind tensor network-based methods.
In the more specific context of supervised kernel machines, these well-known empirical considerations are also captured in the rigorous framework of VC-theory \citep{vapnik_nature_1998}. \citet[theorem 2]{khavari_lower_2021} have recently shown that the VC-dimension and pseudo-dimension of tensor network-constrained models of the form of \cref{eq:TensorKernelMachinesQuantized} satisfies the following upper bound \emph{irrespectively of the choice of tensor network}:
\begin{equation*}\label{eq:VC_general}
    \text{VC}(f) \leq 2P \log(12 \vert V\vert),
\end{equation*}
where $\vert V \vert$ is the number of vertices in the TN (see \cref{def:TensorNetwork}).
Since quantization of the model weights increases the number of vertices in their tensor network representation, quantized models are characterized by higher upper bounds on the VC-dimension and pseudo-dimension \emph{for the same number of model parameters}.
For example, in the non-quantized case, parametrizing the TN as a CPD, TT or TR yields
\begin{equation*}\label{eq:VC}
    \text{VC}(f) \leq 2P \log(12 D),
\end{equation*}
while for the quantized case
\begin{equation*}\label{eq:VCQuantized}
    \text{VC}(f_\text{quantized}) \leq 2P\log(12 D \log M).
\end{equation*}
Hence, in case of CPD, TT and TR this additional possible model expressiveness comes at \emph{no additional computational costs per iteration} when training with gradient descent (\cref{thm:CPD,thm:CPDQuantized}). Setting $Q=2$ provides then in this sense an optimal choice for this additional hyperparameter, as it maximizes the upper bound. In the more general case where $M_d$ is not a power of $2$, this choice corresponds with the prime factorization of $M_d$.
It should be noted that a higher VC-dimension does not imply better performance on unseen data. However as we will see in \cref{sec:generalization,sec:regularization} quantized models tend to outperform their counterparts in the underparameterized regime where TKMs are typically employed, as the gained expressiveness is put fully to good use and does not result in overfitting.

% provides an additional source of regularization, and as such, underparametrized quantized models do not tend to overfit.

% In fact, the generalization gap $R_\text{true}-R_\text{empirical}$ for tensor network models is upper bounded with high probability by \order{\sqrt{\frac{P}{N}\log(\frac{\vert V \vert}{P})}} \citep[Theorem 4]{khavari_lower_2021}, hence it \emph{increases} as a function of $\vert V\vert$. However, since quantization decreases the number of minimum model parameters for a given tensor network structure, we expect better generalization performance in the underparametrized regime with respect to non-quantized models. \textcolor{red}{Not sure if I should include this bound. It opens all sorts of questions. Perhaps just refer to the experiments?}

\begin{figure*}[h]
    %\captionsetup{font=normalsize,labelfont={bf,sf}}
    %\captionsetup[sub]{font=small,labelfont={bf,sf}}
    \centering
    \begin{subcaptionblock}[T][][c]{.25\textwidth}
    \centering
    \includegraphics[width=\textwidth]{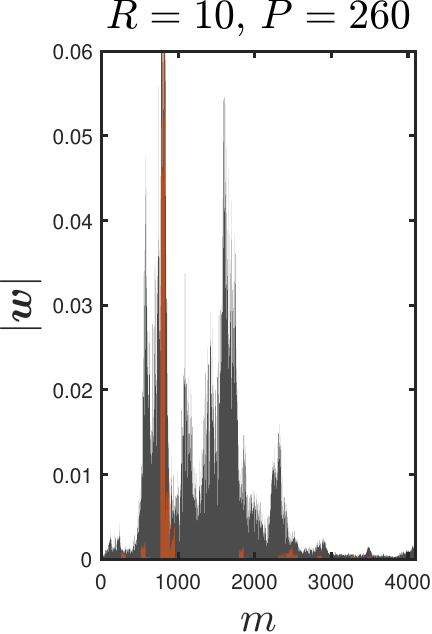}
    % \caption{$R=10$, $P=260$.}\label{fig:R50}
    \end{subcaptionblock}%
    \begin{subcaptionblock}[T][][c]{.25\textwidth}
    \centering
    \includegraphics[width=\textwidth]{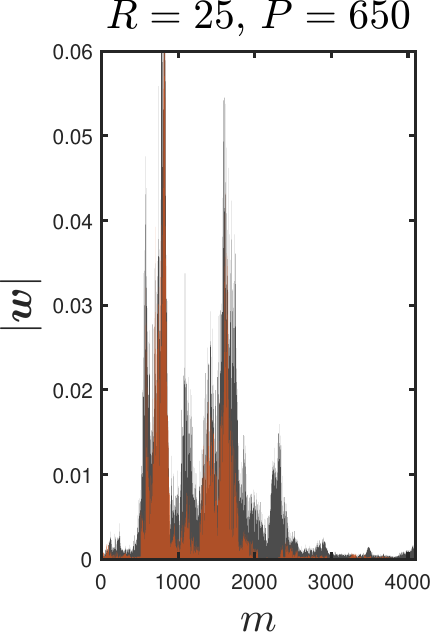}
    % \caption{$R=25$, $P=650$.}\label{fig:R100}
    \end{subcaptionblock}%
    \begin{subcaptionblock}[T][][c]{.25\textwidth}
    \centering
    \includegraphics[width=\textwidth]{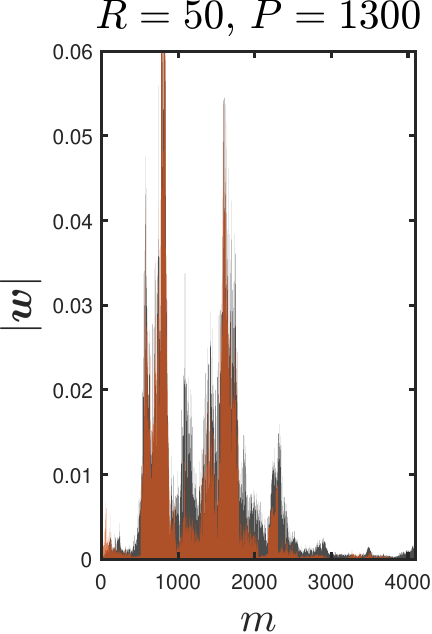}
    % \caption{$R=50$, $P=1300$.}\label{fig:R150}
    \end{subcaptionblock}%
    \begin{subcaptionblock}[T][][c]{.25\textwidth}
    \centering
    \includegraphics[width=\textwidth]{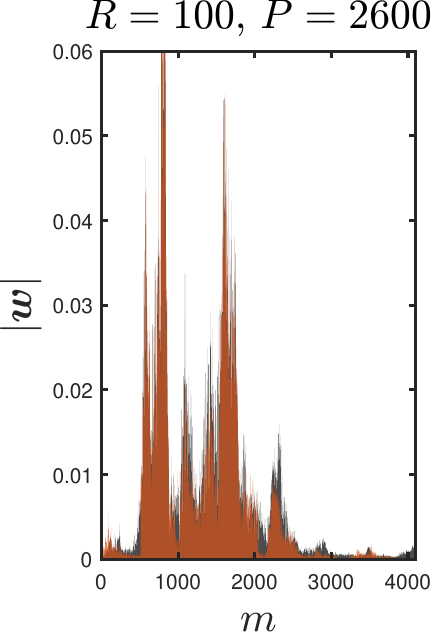}
    % \caption{$R=100$, $P=2600$.}\label{fig:R250}
    \end{subcaptionblock}%
    \caption{Sound dataset. In red, plot of the magnitude of the quantized Fourier coefficients for different values of $R$ and total number of model parameters $P$. The magnitude of the full unconstrained Fourier coefficients is shown in black. It can be observed that increasing the CPD rank $R$ recovers the peaks of frequencies with the highest magnitude.}
    \label{fig:sound}
\end{figure*}

\section{NUMERICAL EXPERIMENTS}

In all experiments we consider a squared loss $\ell (f(\mat{x}),y)=\vert f(\mat{x})-y\vert^2$, scale our inputs to lie in the unit box, and consider Fourier features (\cref{def:FourierFeatureMap}) as they notably suffer less from ill-conditioning than polynomials. In all experiments we model the weight tensor as a CPD of rank $R$. We do not consider other TNs in the numerical experiments for three reasons: 
first, it has been shown that tensor trains are more suited to model time-varying functions such as dynamical systems and time series, as opposed to CPD \citep{khrulkov_expressive_2018}.
Second, CPD adds only one hyperparameter to our model as opposed to $D$ hyperparameters for the tensor train or tensor ring. Choosing these
hyperparameters (tensor train ranks) is not trivial and can yield models with very different performance for the same total number of model parameters.
Third, CPD-based models are invariant to reordering of the features as opposed to tensor train. We believe that this invariance is very much desired in the context of kernel machines.
We solve the ensuing optimization problem using ALS \citep{uschmajew_local_2012}. The source code and data to reproduce all experiments is available at \url{https://github.com/fwesel/QFF}.

\subsection{Improved Generalization Capabilities}\label{sec:generalization}
\begin{table*}[h]
\centering
\sisetup{table-align-uncertainty=true,
        retain-zero-uncertainty=true,
        separate-uncertainty=true,
        scientific-notation = false,
        table-alignment-mode=none}
\caption{MSE for different kernel machines on the airline dataset with one standard deviation. We report the number of basis functions $M$ per dimensions (in case of random approaches we simply report the total number of basis) and model parameters $P$. 
% Keep in mind that the model parameters of QTKM (our approach) are complex-valued.
Notice that QTKM is able to parsimoniously predict airline delay with a restricted number of model parameters, achieving state-of-the art performance on this dataset.}
\robustify \bfseries
\label{tbl:large}
\begin{tabular}{
l
S[table-number-alignment = center]
S[table-number-alignment = center]
S[table-format=1.3(4),detect-weight]
}
\toprule                           
{Method}    &   {$M$}   &   {$P\downarrow$}   &   {MSE} \\
\cmidrule{1-4}
VFF \citep{hensman_variational_2017}              &   40    &  320   &   0.827 \pm 0.004 \\
Hilbert-GP \citep{solin_hilbert_2020}             &   40    &  320   &   0.827 \pm 0.005 \\
VISH \citep{dutordoir_sparse_2020}                &   660    &  660   &   0.834 \pm 0.055 \\
SVIGP \citep{hensman_gaussian_2013}               &   1000   &  1000  &   0.791 \pm 0.005 \\
Falkon \citep{meanti_kernel_2020}                 &   10000  &  10000 &   0.758 \pm 0.005 \\
\cmidrule{1-4}
TKM ($R=4$)     &   64     &  2048  &   0.789 \pm 0.005 \\    
TKM ($R=6$      &   64     &  3072  &   0.773 \pm 0.006 \\    
TKM ($R=8$)     &   64     &  4096  &   0.765 \pm 0.007 \\    
\cmidrule{1-4}
QTKM ($R=20$)        &   64     &  1920  &   0.764 \pm 0.005 \\
% QTKM ($R=25$)        &   64     &  2400  &   0.757 \pm 0.005 \\
QTKM ($R=30$)        &   64     &  2880  &   0.754 \pm 0.005 \\
% QTKM ($R=35$)        &   64     &  3360  &   0.751 \pm 0.005 \\
QTKM ($R=40$)        &   64     &  3840  &  \bfseries 0.748 \pm 0.005 \\
\bottomrule
\end{tabular}
\end{table*}

In this experiment we verify the expected quantization to positively affect the generalization capabilities of quantized models. We compare QTKM (our approach) with TKM \citep{wahls_learning_2014,stoudenmire_supervised_2016,kargas_supervised_2021,wesel_large-scale_2021}, random Fourier features (RFF) \citep{rahimi_random_2007}, and with the full, unconstrained model (kernel ridge regression (KRR) which is our baseline, as we are dealing in all cases with squared loss). 
For our comparison we select eight small UCI datasets \citep{dua_uci_2017}. This choice allows us to train KRR by solving its dual optimization problem and thus to implicitly consider $\prod_{d=1}^D M_d$ features. For each dataset, we select uniformly at random $\qty{80}{\percent}$ of the data for training, and keep the rest for test.
We set $Q=2$ and select the remaining hyperparameters ($\lambda$ and $L$) by $\num{3}$-fold cross validating KRR. We set the number of basis functions $M_d=16$ uniformly for all $d$ for all models, so that they learn from the same representation (except for RFF, which is intrinsically random). We then vary the rank $R$ of the non-quantized tensorized model from $R=1,2,\ldots,6$ and train all other models such that their number of model parameters $P$ is at most equal to the ones of the non-quantized model.
This means that for TKM $P=R\sum_{d=1}^D M_d$, for QTKM $P=2R\sum_{d=1}^D \log_2 M_d$ and for RFF $P$ equals the number of random frequencies. To make sure that TKM and QTKM converge, we run ALS for a very large number of iterations ($\num{5000}$).
We repeat the procedure $\num{10}$ times, and plot the mean and standard deviation of the test mean squared error (MSE) in \cref{fig:generalization}.

In \cref{fig:generalization} one can observe that on all datasets, for the same number of model parameters $P$ and identical features, the generalization performance of QTKM is equivalent or better in term of test MSE. An intuitive explanation for these results is that for equal $P$, quantization allows to explicitly model correlations within each of the $D$ modes of the feature map, yielding models with increased learning capacity.
We notice that while on most datasets the tensor-based approaches recover the performance of KRR, in one case, namely on the yacht dataset, the performance is better than baseline, pointing out at the regularizing effect of the quantized CPD model.
Furthermore, on all datasets examined in \cref{fig:generalization} it can be observed that QTKM switches from underfitting to overfitting regime (first local optimum of the learning curve) before TKM, indicating that indeed its capacity is saturated with less model parameters. At that sweet spot, TKM is still underfitting and underperforming with respect to QTKM. For a further increase in model parameters both models exhibit double descent, as can be observed on the qsar, qsar\_fish and airfoil datasets. Note that QTKM outperforms TKM in a similar fashion on the training set (\cref{fig:generalization_train} in the appendix), corroborating the presented analysis.
In \cref{fig:generalization} it can also be seen that except on the examined $\num{2}$-dimensional dataset, both tensor network are consistently outperforming RFF. As we will see in \cref{sec:regularization}, these tensor network-based methods are able to find in a data-dependent way a parsimonious model representation given an exponentially large feature space. This is in contrast to random methods such as RFF, which perform feature selection prior to training and are in this sense oblivious to training data.

\subsection{Regularizing Effect of Quantization}\label{sec:regularization}

We would like to gain insight in the regularizing effect caused by modeling the quantized weights as an underparametrized tensor network. For this reason we investigate how the Fourier coefficients are approximated as a function of the CPD rank in a one-dimensional dataset. In order to remove other sources of regularization, we set $\lambda=0$. The sound dataset \citep{wilson_kernel_2015} is a one-dimensional time series regression task which comprises $\num{60000}$ sampled points of a sound wave. The training set consists of $N=\num{59309}$ points, of which the remainder is kept for test.
Based on the Nyquist–Shannon sampling theorem, we consider $M=2^{13}=8192$ Fourier features, which we quantize with $Q=2$. We model the signal as a having unit period, hence set $L=1$. The Fourier coefficients are modeled as a CPD tensor, with rank $R=10,25,50,100$ in order to yield underparametrized models ($P\ll M$).
We plot the magnitude of the Fourier coefficients, which we obtain by minimizing \cref{eq:TensorKernelMachinesQuantized} under squared loss.

We compare the magnitude of the quantized weights with the magnitude of the unconstrained model response, obtained by solving \cref{eq:EmpiricalRisk}, in \cref{fig:sound}. From \cref{fig:sound} we can see that for low values of $R$ the quantized kernel machine does not recover the coefficients associated with the lowest frequencies, as a data-independent approach would. Instead, we observe that the coefficients which are recovered for lower ranks, e.g. in case of $R=10$, are the peaks with the highest magnitude. This is explained by the fact that the additional modes introduced by $Q=2$-quantization force the underparametrized tensor network to model the nonlinear relation between different basis which under squared-loss maximize the energy of the signal. As the rank increases, the increased model flexibility allows to model more independent nonlinearities. We can see that already for $R=100$ the two spectra become almost indistinguishable. We report the relative approximation error of the weights and the standardized mean absolute error on the test set in \cref{appendix:sound}.

% the number of model parameters $P=2\log_2 M R$, the compression ratio of the quantized model weights $\nicefrac{M}{P}$, as well as the relative approximation error of the weights $\nicefrac{\vert\vert\mat{w}-\mat{w}_\text{CPD}\vert\vert}{\vert\vert\mat{w}\vert\vert}$ and the standardized mean absolute error (SMAE) of the reconstruction error on the test set as a function of the CPD rank.

% On this dataset, \citet{wilson_kernel_2015,dong_scalable_2017,yadav_faster_2021} report a reconstruction SMAE of $\approx\num{0.2}$ (the results are presented in a plot) for $M=8000$ basis functions. Note how quantization coupled with a underparametrized CPD allows to obtain a better reconstruction of the signal in terms of SMAE with a third of the number of model parameters.

\subsection{Large-Scale Regression}\label{sec:large-scale}

In order to showcase and compare out approach with existing literature in the realm of kernel machines, we consider the airline dataset \citep{hensman_gaussian_2013}, an $8$-dimensional dataset which consists of $N=\num{5929413}$ recordings of commercial airplane flight delays that occurred in \num{2008} in the USA. 
As is standard on this dataset \citep{samo_string_2016}, we consider a uniform random draw of $\nicefrac{2}{3} N$ for training and keep the remainder for the evaluation of the MSE on the test set and repeat the procedure ten times.
In order to capture the complicated nonlinear relation between input and output, we resort to consider $M_d=64$ Fourier features per dimension, which we quantize with $Q=2$. For this experiment, we set $L=10$, $\lambda=\num{1e-10}$ and run the ALS optimizer for $\num{25}$ epochs. We train three different QTKMs with $R=20,30,40$. 

We present the results in \cref{tbl:large}, where we can see that QTKM (our approach) is best at predicting airline delay in term of MSE. Other grid-based approaches, such as VFE \citep{hensman_variational_2017} or Hilbert-GP \citep{solin_hilbert_2020}, are forced to resort to additive kernel modeling and thus disregard higher-order interactions between Fourier features pertaining to different dimension. In contrast, QTKM is able to construct $R$ data-driven explanatory variables based on an exponentially large set of Fourier features. When compared with its non-quantized counterpart TKM, we can see that our quantized approach outperforms it with approximately half of its model parameters.
Training QTKM on the Intel Core i7-10610U CPU of a Dell Inc. Latitude 7410 laptop with $\qty{16}{\giga\byte}$ of RAM took $\qty{6613 \pm 40}{\second}$ for $R=20$ and took $\qty{13039 \pm 114}{\second}$ for $R=40$ .

\section{CONCLUSION}
We proposed to quantize Fourier and pure-power polynomial features, which allowed us to quantize the model weights in the context of tensor network-constrained kernel machines. We verified experimentally the theoretically expected increase in model flexibility which allows us to construct more expressive models with the same number of model parameters which learn from the same exact features at the same computational cost per iteration.

Our approach can be readily incorporated in other tensor network-based learning methods which make use of pure-power polynomials or Fourier features.

\subsubsection*{Acknowledgments}
We would like to thank the anonymous reviewers and Albert Saiapin for their numerous suggestions and improvements which have greatly improved the quality of this paper.
Frederiek Wesel, and thereby this work, is supported by the Delft University of Technology AI Labs
program. The authors declare no competing interests.

\bibliography{Bibliography.bib}

\begin{thebibliography}{}

\bibitem[Batselier et~al., 2017]{batselier_tensor_2017-1}
Batselier, K., Chen, Z., and Wong, N. (2017).
\newblock Tensor {{Network}} alternating linear scheme for {{MIMO Volterra}} system identification.
\newblock {\em Automatica}, 84:26--35.

\bibitem[Blondel et~al., 2016]{blondel_higher-order_2016}
Blondel, M., Fujino, A., Ueda, N., and Ishihata, M. (2016).
\newblock Higher-{{Order Factorization Machines}}.
\newblock In {\em Advances in {{Neural Information Processing Systems}}}, volume~29. {Curran Associates, Inc.}

\bibitem[Blondel et~al., 2017]{blondel_multi-output_2017}
Blondel, M., Niculae, V., Otsuka, T., and Ueda, N. (2017).
\newblock Multi-output {{Polynomial Networks}} and {{Factorization Machines}}.
\newblock In {\em Advances in {{Neural Information Processing Systems}}}, volume~30. {Curran Associates, Inc.}

\bibitem[Chen et~al., 2018]{chen_parallelized_2018}
Chen, Z., Batselier, K., Suykens, J. A.~K., and Wong, N. (2018).
\newblock Parallelized {{Tensor Train Learning}} of {{Polynomial Classifiers}}.
\newblock {\em IEEE Transactions on Neural Networks and Learning Systems}, 29(10):4621--4632.

\bibitem[Cheng et~al., 2021]{cheng_supervised_2021}
Cheng, S., Wang, L., and Zhang, P. (2021).
\newblock Supervised learning with projected entangled pair states.
\newblock {\em Physical Review B}, 103(12):125117.

\bibitem[Cichocki, 2014]{cichocki_era_2014}
Cichocki, A. (2014).
\newblock Era of {{Big Data Processing}}: {{A New Approach}} via {{Tensor Networks}} and {{Tensor Decompositions}}.
\newblock {\em arXiv:1403.2048 [cs]}.

\bibitem[Cichocki et~al., 2016]{cichocki_tensor_2016}
Cichocki, A., Lee, N., Oseledets, I., Phan, A.-H., Zhao, Q., and Mandic, D.~P. (2016).
\newblock Tensor {{Networks}} for {{Dimensionality Reduction}} and {{Large-Scale Optimization}}: {{Part}} 1 {{Low-Rank Tensor Decompositions}}.
\newblock {\em Foundations and Trends{\textregistered} in Machine Learning}, 9(4-5):249--429.

\bibitem[Cichocki et~al., 2017]{cichocki_tensor_2017}
Cichocki, A., Phan, A.-H., Zhao, Q., Lee, N., Oseledets, I.~V., Sugiyama, M., and Mandic, D. (2017).
\newblock Tensor {{Networks}} for {{Dimensionality Reduction}} and {{Large-Scale Optimizations}}. {{Part}} 2 {{Applications}} and {{Future Perspectives}}.
\newblock {\em Foundations and Trends{\textregistered} in Machine Learning}, 9(6):249--429.

\bibitem[Comon et~al., 2009]{comon_tensor_2009}
Comon, P., Luciani, X., and {de Almeida}, A. L.~F. (2009).
\newblock Tensor decompositions, alternating least squares and other tales.
\newblock {\em Journal of Chemometrics}, 23(7-8):393--405.

\bibitem[Cortes and Vapnik, 1995]{cortes_support-vector_1995}
Cortes, C. and Vapnik, V. (1995).
\newblock Support-vector networks.
\newblock {\em Machine Learning}, 20(3):273--297.

\bibitem[De~Branges, 1959]{de_branges_stone-weierstrass_1959}
De~Branges, L. (1959).
\newblock The {{Stone-Weierstrass Theorem}}.
\newblock {\em Proceedings of the American Mathematical Society}, 10(5):822--824.

\bibitem[De~Lathauwer, 2008a]{de_lathauwer_decompositions_2008}
De~Lathauwer, L. (2008a).
\newblock Decompositions of a {{Higher-Order Tensor}} in {{Block Terms}}---{{Part I}}: {{Lemmas}} for {{Partitioned Matrices}}.
\newblock {\em SIAM Journal on Matrix Analysis and Applications}, 30(3):1022--1032.

\bibitem[De~Lathauwer, 2008b]{de_lathauwer_decompositions_2008-1}
De~Lathauwer, L. (2008b).
\newblock Decompositions of a {{Higher-Order Tensor}} in {{Block Terms}}---{{Part II}}: {{Definitions}} and {{Uniqueness}}.
\newblock {\em SIAM Journal on Matrix Analysis and Applications}, 30(3):1033--1066.

\bibitem[Dua and Graff, 2017]{dua_uci_2017}
Dua, D. and Graff, C. (2017).
\newblock {{UCI Machine Learning Repository}}.

\bibitem[Dutordoir et~al., 2020]{dutordoir_sparse_2020}
Dutordoir, V., Durrande, N., and Hensman, J. (2020).
\newblock Sparse {{Gaussian Processes}} with {{Spherical Harmonic Features}}.
\newblock In {\em International {{Conference}} on {{Machine Learning}}}, pages 2793--2802. {PMLR}.

\bibitem[Efthymiou et~al., 2019]{efthymiou_tensornetwork_2019}
Efthymiou, S., Hidary, J., and Leichenauer, S. (2019).
\newblock {{TensorNetwork}} for {{Machine Learning}}.
\newblock {\em arXiv:1906.06329 [cond-mat, physics:physics, stat]}.

\bibitem[Evenbly and Vidal, 2009]{evenbly_algorithms_2009}
Evenbly, G. and Vidal, G. (2009).
\newblock Algorithms for entanglement renormalization.
\newblock {\em Physical Review B}, 79(14):144108.

\bibitem[Favier and Bouilloc, 2009]{favier_parametric_2009}
Favier, G. and Bouilloc, T. (2009).
\newblock Parametric complexity reduction of {{Volterra}} models using tensor decompositions.
\newblock In {\em 2009 17th {{European Signal Processing Conference}}}, pages 2288--2292.

\bibitem[Grasedyck, 2010]{grasedyck_hierarchical_2010}
Grasedyck, L. (2010).
\newblock Hierarchical {{Singular Value Decomposition}} of {{Tensors}}.
\newblock {\em SIAM Journal on Matrix Analysis and Applications}, 31(4):2029--2054.

\bibitem[Hackbusch and K{\"u}hn, 2009]{hackbusch_new_2009}
Hackbusch, W. and K{\"u}hn, S. (2009).
\newblock A {{New Scheme}} for the {{Tensor Representation}}.
\newblock {\em Journal of Fourier Analysis and Applications}, 15(5):706--722.

\bibitem[Hensman et~al., 2017]{hensman_variational_2017}
Hensman, J., Durrande, N., and Solin, A. (2017).
\newblock Variational {{Fourier}} features for {{Gaussian}} processes.
\newblock {\em The Journal of Machine Learning Research}, 18(1):5537--5588.

\bibitem[Hensman et~al., 2013]{hensman_gaussian_2013}
Hensman, J., Fusi, N., and Lawrence, N.~D. (2013).
\newblock Gaussian processes for {{Big}} data.
\newblock In {\em Proceedings of the {{Twenty-Ninth Conference}} on {{Uncertainty}} in {{Artificial Intelligence}}}, {{UAI}}'13, pages 282--290. {AUAI Press}.

\bibitem[Hitchcock, 1927]{hitchcock_expression_1927}
Hitchcock, F.~L. (1927).
\newblock The {{Expression}} of a {{Tensor}} or a {{Polyadic}} as a {{Sum}} of {{Products}}.
\newblock {\em Journal of Mathematics and Physics}, 6(1-4):164--189.

\bibitem[Holtz et~al., 2012]{holtz_alternating_2012}
Holtz, S., Rohwedder, T., and Schneider, R. (2012).
\newblock The {{Alternating Linear Scheme}} for {{Tensor Optimization}} in the {{Tensor Train Format}}.
\newblock {\em SIAM Journal on Scientific Computing}, 34(2):A683--A713.

\bibitem[Karagoz and Batselier, 2020]{karagoz_nonlinear_2020}
Karagoz, R. and Batselier, K. (2020).
\newblock Nonlinear system identification with regularized {{Tensor Network B-splines}}.
\newblock {\em Automatica}, 122:109300.

\bibitem[Kargas and Sidiropoulos, 2021]{kargas_supervised_2021}
Kargas, N. and Sidiropoulos, N.~D. (2021).
\newblock Supervised {{Learning}} and {{Canonical Decomposition}} of {{Multivariate Functions}}.
\newblock {\em IEEE Transactions on Signal Processing}, pages 1--1.

\bibitem[Khavari and Rabusseau, 2021]{khavari_lower_2021}
Khavari, B. and Rabusseau, G. (2021).
\newblock Lower and {{Upper Bounds}} on the {{Pseudo-Dimension}} of {{Tensor Network Models}}.
\newblock In {\em Advances in {{Neural Information Processing Systems}}}.

\bibitem[Khoromskij, 2011]{khoromskij_odlog_2011}
Khoromskij, B.~N. (2011).
\newblock O({{Dlog}}\,{{N}})-{{Quantics Approximation}} of {{N-d Tensors}} in {{High-Dimensional Numerical Modeling}}.
\newblock {\em Constructive Approximation}, 34(2):257--280.

\bibitem[Khrulkov et~al., 2018]{khrulkov_expressive_2018}
Khrulkov, V., Novikov, A., and Oseledets, I. (2018).
\newblock Expressive power of recurrent neural networks.
\newblock In {\em International {{Conference}} on {{Learning Representations}}}.

\bibitem[Kolda and Bader, 2009]{kolda_tensor_2009}
Kolda, T.~G. and Bader, B.~W. (2009).
\newblock Tensor {{Decompositions}} and {{Applications}}.
\newblock {\em SIAM Review}, 51(3):455--500.

\bibitem[Le et~al., 2013]{le_fastfood_2013}
Le, Q., Sarlos, T., and Smola, A. (2013).
\newblock Fastfood - {{Computing Hilbert Space Expansions}} in loglinear time.
\newblock In {\em Proceedings of the 30th {{International Conference}} on {{Machine Learning}}}, pages 244--252. {PMLR}.

\bibitem[Meanti et~al., 2020]{meanti_kernel_2020}
Meanti, G., Carratino, L., Rosasco, L., and Rudi, A. (2020).
\newblock Kernel methods through the roof: {{Handling}} billions of points efficiently.
\newblock In Larochelle, H., Ranzato, M., Hadsell, R., Balcan, M.~F., and Lin, H., editors, {\em Advances in Neural Information Processing Systems}, volume~33, pages 14410--14422. {Curran Associates, Inc.}

\bibitem[Meister et~al., 2019]{meister_tight_2019}
Meister, M., Sarlos, T., and Woodruff, D. (2019).
\newblock Tight {{Dimensionality Reduction}} for {{Sketching Low Degree Polynomial Kernels}}.
\newblock In {\em Advances in {{Neural Information Processing Systems}}}, volume~32. {Curran Associates, Inc.}

\bibitem[Novikov et~al., 2018]{novikov_exponential_2018}
Novikov, A., Oseledets, I., and Trofimov, M. (2018).
\newblock Exponential machines.
\newblock {\em Bulletin of the Polish Academy of Sciences: Technical Sciences; 2018; 66; No 6 (Special Section on Deep Learning: Theory and Practice); 789-797}.

\bibitem[Novikov et~al., 2021]{novikov_automatic_2021}
Novikov, A., Rakhuba, M., and Oseledets, I. (2021).
\newblock Automatic differentiation for {{Riemannian}} optimization on low-rank matrix and tensor-train manifolds.
\newblock {\em arXiv:2103.14974 [cs, math]}.

\bibitem[Oseledets, 2011]{oseledets_tensor-train_2011}
Oseledets, I.~V. (2011).
\newblock Tensor-{{Train Decomposition}}.
\newblock {\em SIAM Journal on Scientific Computing}, 33(5):2295--2317.

\bibitem[Pham and Pagh, 2013]{pham_fast_2013}
Pham, N. and Pagh, R. (2013).
\newblock Fast and scalable polynomial kernels via explicit feature maps.
\newblock In {\em Proceedings of the 19th {{ACM SIGKDD}} International Conference on {{Knowledge}} Discovery and Data Mining}, {{KDD}} '13, pages 239--247. {Association for Computing Machinery}.

\bibitem[Rahimi and Recht, 2007]{rahimi_random_2007}
Rahimi, A. and Recht, B. (2007).
\newblock Random features for large-scale kernel machines.
\newblock In {\em Proceedings of the 20th {{International Conference}} on {{Neural Information Processing Systems}}}, pages 1177--1184. {Curran Associates Inc.}

\bibitem[Rasmussen and Williams, 2006]{rasmussen_gaussian_2006}
Rasmussen, C.~E. and Williams, C. K.~I. (2006).
\newblock {\em Gaussian Processes for Machine Learning}.
\newblock Adaptive Computation and Machine Learning. {MIT Press}, {Cambridge, Mass}.

\bibitem[Rendle, 2010]{rendle_factorization_2010}
Rendle, S. (2010).
\newblock Factorization {{Machines}}.
\newblock In {\em 2010 {{IEEE International Conference}} on {{Data Mining}}}, pages 995--1000.

\bibitem[Samo and Roberts, 2016]{samo_string_2016}
Samo, Y.-L.~K. and Roberts, S.~J. (2016).
\newblock String and {{Membrane Gaussian Processes}}.
\newblock {\em Journal of Machine Learning Research}, 17(131):1--87.

\bibitem[{Shawe-Taylor} and Cristianini, 2004]{shawe-taylor_kernel_2004}
{Shawe-Taylor}, J. and Cristianini, N. (2004).
\newblock {\em Kernel {{Methods}} for {{Pattern Analysis}}}.
\newblock {Cambridge University Press}.

\bibitem[Sidiropoulos et~al., 2017]{sidiropoulos_tensor_2017}
Sidiropoulos, N.~D., De~Lathauwer, L., Fu, X., Huang, K., Papalexakis, E.~E., and Faloutsos, C. (2017).
\newblock Tensor {{Decomposition}} for {{Signal Processing}} and {{Machine Learning}}.
\newblock {\em IEEE Transactions on Signal Processing}, 65(13):3551--3582.

\bibitem[Solin and S{\"a}rkk{\"a}, 2020]{solin_hilbert_2020}
Solin, A. and S{\"a}rkk{\"a}, S. (2020).
\newblock Hilbert space methods for reduced-rank {{Gaussian}} process regression.
\newblock {\em Statistics and Computing}, 30(2):419--446.

\bibitem[Stoudenmire and Schwab, 2016]{stoudenmire_supervised_2016}
Stoudenmire, E.~M. and Schwab, D.~J. (2016).
\newblock Supervised learning with tensor networks.
\newblock In {\em Proceedings of the 30th {{International Conference}} on {{Neural Information Processing Systems}}}, pages 4806--4814. {Curran Associates Inc.}

\bibitem[Suykens et~al., 2002]{suykens_least_2002}
Suykens, J. A.~K., Van~Gestel, T., De~Brabanter, J., De~Moor, B., and Vandewalle, J. (2002).
\newblock {\em Least {{Squares Support Vector Machines}}}.
\newblock {World Scientific}.

\bibitem[Tucker, 1963]{tucker_implications_1963-1}
Tucker, L.~R. (1963).
\newblock Implications of factor analysis of three-way matrices for measurement of change.
\newblock {\em Problems in measuring change}, 15(122-137):3.

\bibitem[Tucker, 1966]{tucker_mathematical_1966}
Tucker, L.~R. (1966).
\newblock Some mathematical notes on three-mode factor analysis.
\newblock {\em Psychometrika}, 31(3):279--311.

\bibitem[Uschmajew, 2012]{uschmajew_local_2012}
Uschmajew, A. (2012).
\newblock Local {{Convergence}} of the {{Alternating Least Squares Algorithm}} for {{Canonical Tensor Approximation}}.
\newblock {\em SIAM Journal on Matrix Analysis and Applications}, 33(2):639--652.

\bibitem[Vapnik, 1998]{vapnik_nature_1998}
Vapnik, V.~N. (1998).
\newblock {\em The {{Nature}} of {{Statistical Learning Theory}}}.
\newblock {Springer New York}.

\bibitem[Verstraete and Cirac, 2004]{verstraete_renormalization_2004}
Verstraete, F. and Cirac, J.~I. (2004).
\newblock Renormalization algorithms for {{Quantum-Many Body Systems}} in two and higher dimensions.

\bibitem[Wahls et~al., 2014]{wahls_learning_2014}
Wahls, S., Koivunen, V., Poor, H.~V., and Verhaegen, M. (2014).
\newblock Learning multidimensional {{Fourier}} series with tensor trains.
\newblock In {\em 2014 {{IEEE Global Conference}} on {{Signal}} and {{Information Processing}} ({{GlobalSIP}})}, pages 394--398.

\bibitem[Wesel and Batselier, 2021]{wesel_large-scale_2021}
Wesel, F. and Batselier, K. (2021).
\newblock Large-{{Scale Learning}} with {{Fourier Features}} and {{Tensor Decompositions}}.
\newblock In {\em Advances in {{Neural Information Processing Systems}}}.

\bibitem[White, 1992]{white_density_1992}
White, S.~R. (1992).
\newblock Density matrix formulation for quantum renormalization groups.
\newblock {\em Physical Review Letters}, 69(19):2863--2866.

\bibitem[Williams and Seeger, 2001]{williams_using_2001}
Williams, C. and Seeger, M. (2001).
\newblock Using the {{Nystr{\"o}m Method}} to {{Speed Up Kernel Machines}}.
\newblock In {\em Advances in {{Neural Information Processing Systems}} 13}, pages 682--688. {MIT Press}.

\bibitem[Wilson and Nickisch, 2015]{wilson_kernel_2015}
Wilson, A. and Nickisch, H. (2015).
\newblock Kernel {{Interpolation}} for {{Scalable Structured Gaussian Processes}} ({{KISS-GP}}).
\newblock In {\em Proceedings of the 32nd {{International Conference}} on {{Machine Learning}}}, pages 1775--1784. {PMLR}.

\bibitem[Woodruff, 2014]{woodruff_sketching_2014}
Woodruff, D.~P. (2014).
\newblock Sketching as a {{Tool}} for {{Numerical Linear Algebra}}.
\newblock {\em Foundations and Trends{\textregistered} in Theoretical Computer Science}, 10(1--2):1--157.

\bibitem[Zhao et~al., 2016]{zhao_tensor_2016}
Zhao, Q., Zhou, G., Xie, S., Zhang, L., and Cichocki, A. (2016).
\newblock Tensor {{Ring Decomposition}}.
\newblock {\em arXiv:1606.05535 [cs]}.

\end{thebibliography}

\section*{Checklist}

 \begin{enumerate}

 \item For all models and algorithms presented, check if you include:
 \begin{enumerate}
   \item A clear description of the mathematical setting, assumptions, algorithm, and/or model. Yes.
   \item An analysis of the properties and complexity (time, space, sample size) of any algorithm. Yes.
   \item (Optional) Anonymized source code, with specification of all dependencies, including external libraries. Yes.
 \end{enumerate}

 \item For any theoretical claim, check if you include:
 \begin{enumerate}
   \item Statements of the full set of assumptions of all theoretical results. Yes.
   \item Complete proofs of all theoretical results. Yes.
   \item Clear explanations of any assumptions. Yes.   
 \end{enumerate}

 \item For all figures and tables that present empirical results, check if you include:
 \begin{enumerate}
   \item The code, data, and instructions needed to reproduce the main experimental results (either in the supplemental material or as a URL). Yes.
   \item All the training details (e.g., data splits, hyperparameters, how they were chosen). Yes.
         \item A clear definition of the specific measure or statistics and error bars (e.g., with respect to the random seed after running experiments multiple times). Yes.
         \item A description of the computing infrastructure used. (e.g., type of GPUs, internal cluster, or cloud provider). Yes.
 \end{enumerate}

 \item If you are using existing assets (e.g., code, data, models) or curating/releasing new assets, check if you include:
 \begin{enumerate}
   \item Citations of the creator If your work uses existing assets. Yes.
   \item The license information of the assets, if applicable. Yes.
   \item New assets either in the supplemental material or as a URL, if applicable. Yes. All necessary assets can be found in the anonymized URL.
   \item Information about consent from data providers/curators. Not Applicable (UCI data).
   \item Discussion of sensible content if applicable, e.g., personally identifiable information or offensive content. Not applicable. All data is anonymous and open source.
 \end{enumerate}

 \item If you used crowdsourcing or conducted research with human subjects, check if you include:
 \begin{enumerate}
   \item The full text of instructions given to participants and screenshots. Not Applicable.
   \item Descriptions of potential participant risks, with links to Institutional Review Board (IRB) approvals if applicable. No/Not Applicable.
   \item The estimated hourly wage paid to participants and the total amount spent on participant compensation. Not Applicable.
 \end{enumerate}

 \end{enumerate}

\newpage
\appendix
\onecolumn

\section{DEFINITIONS}

\begin{definition}[Tensor network \citep{khavari_lower_2021}]\label{def:TensorNetwork}
    % A tensor network decomposes a $D$-dimensional tensor ${\ten{W}\in\mathbb{C}^{M_1\times M_2\times \cdots \times M_D}}$ as a multi-linear function of a number of \emph{core} tensors. The total number of parameters of the tensor network $P$ is then equal to the sum of the entries of all cores tensors.
    Given a graph $G = (V,E, \text{dim})$ where $V$ is a set of vertices, $E$ is a set of edges and $\text{dim}:E\rightarrow \mathbb{N}$ assigns a dimension to each edge, a tensor network assigns a core tensor $\ten{C}_v$ to each vertex of the graph, such that $\ten{C}_v\in \kron_{e\in E_v}\mathbb{C}^{\text{dim}(e)}$. Here $E_v=\{e\in E \vert v\in e\}$ is the set of edges connected to vertex $v$.
    The resulting tensor is a tensor in $\kron_{e\in E \cap  V}\mathbb{C}^{\text{dim}(e)}$.
    The number of parameters of the tensor network is then $P=\sum_{v\in V}\prod_{e\in E_v} \text{dim}(e)$.
\end{definition}

\section{PROOFS}
\subsection{Tensor Network Kernel Machine}\label{appendix:proofs_tkm}

\begin{theorem}
    Suppose $\tensorizeTimes{\mat{w}}$ is a tensor network. Then the dependency on $M$ of the computational complexity for the model responses
    \begin{equation*}
        f\left(\mat{x}\right) = \innerfrob{\bigotimes_{d=1}^D \mat{v}^{(d)}\left(x_d\right)}{\mat{w}},
    \end{equation*}
    is \order{M^t}, where $t$ is the maximum number of singleton edges per core.
\end{theorem}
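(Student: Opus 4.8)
The plan is to realize $f(\mathbf{x})$ as the full contraction of a \emph{closed} (edge-free) tensor network and to bound the $M$-dependence of one explicit contraction schedule. First I would fix notation. Since $\tensorizeTimes{\mat{w}}$ is a $D$-way tensor of size $M\times\cdots\times M$, its tensor network on $G=(V,E,\text{dim})$ has exactly $D$ singleton (dangling) edges $e_1,\dots,e_D$ with $\text{dim}(e_d)=M$ — these carry the $D$ modes of $\mat{w}$ — while every remaining edge joins two vertices and has a bond dimension that does not depend on $M$. For $v\in V$ let $S_v\subseteq\{1,\dots,D\}$ be the singleton edges incident to $v$, so $t_v:=|S_v|\le t$, and let $b_v:=\prod_{e\in E_v,\,e\notin\{e_1,\dots,e_D\}}\text{dim}(e)$ be the product of its internal bond dimensions, a constant in $M$.

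Next I would rewrite the inner product. Expanding the Frobenius product of \cref{eq:inner} and substituting the tensor-network expression for $w_{m_1\cdots m_D}$ exhibits $f(\mathbf{x})$ as the scalar obtained by attaching the rank-one vector $\overline{\mat{v}^{(d)}(x_d)}\in\mathbb{C}^M$ to the dangling edge $e_d$ for each $d$ (the conjugation being an $\mathcal{O}(DM)$ preprocessing step that I would note and then ignore) and then contracting the resulting edge-free network. This reduces the claim to bounding the cost of fully contracting that closed network.

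I would then analyze a two-stage schedule. Stage one absorbs the feature vectors into the cores: for each $v$, contract $\ten{C}_v$ successively with $\overline{\mat{v}^{(d)}(x_d)}$ for $d\in S_v$, producing a core $\tilde{\ten{C}}_v$ with only internal edges. Each such contraction sweeps over at most the current number of core entries, which is bounded by $M^{t_v}b_v$; hence absorbing all $t_v$ vectors into $\ten{C}_v$ costs $\mathcal{O}(t_v M^{t_v}b_v)$, and since $t_v\le D$ and $b_v$ are $M$-independent this is $\mathcal{O}(M^{t_v})$ in its $M$-dependence. Summing over $v\in V$ gives $\mathcal{O}(M^{\max_v t_v})=\mathcal{O}(M^{t})$. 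Stage two contracts the residual network $\{\tilde{\ten{C}}_v\}_{v\in V}$, all of whose edges have $M$-independent dimensions, so any fixed contraction order costs $\mathcal{O}(1)$ as a function of $M$. Adding the two stages yields total $M$-dependence $\mathcal{O}(M^{t})$. As a sanity check, CPD, TT and TR all have $t=1$, so this recovers the linear-in-$M$ part of the $\mathcal{O}(P)$ bound of \cref{thm:CPD}.

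The main point to get right is the book-keeping in stage one: that each mode of $\mat{w}$ corresponds to a \emph{single} dangling edge incident to exactly one core, so that the feature vectors are absorbed exactly once and the "edge-free network" reformulation is legitimate; and that contracting a vector into a core with $t_v$ dangling legs genuinely costs $\mathcal{O}(M^{t_v})$ rather than a higher power — i.e. that one only ever iterates over the entries of the core (count $M^{t_v}b_v$) once per absorbed vector, never forming a larger intermediate. Everything else — constants, dependence on $D$, the topology-dependent cost of the residual contraction, and the bond dimensions — is suppressed, which is exactly what the statement permits since it concerns only the exponent of $M$.
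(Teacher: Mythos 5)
Your proposal is correct and follows essentially the same route as the paper: the paper's (much terser) proof simply observes that the $M$-dependence comes from summing over the singleton edges, at most $t$ per core, which is exactly what your stage-one absorption of the feature vectors makes precise, while your stage two confirms the residual contraction is $M$-independent. Your version is a more careful and rigorous elaboration of the paper's one-line argument, including the consistency check against the CPD/TT/TR case.
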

\begin{proof}
    Let $t$ be the maximum number of singleton edges per core. Since taking the Frobenius inner product (\cref{eq:inner}) involves summing over all singleton edges $\underbrace{M,M,\ldots,M}_{D \ \text{times}}$, the required number of FLOPS will be \order{M^t}.
\end{proof}

\begin{corollary}\label{thm:TKMComplexity2}
Suppose $\tensorizeTimes{\mat{w}}$ is a tensor network with $t=1$ maximum number of singleton edges per core. Then the dependency on $M$ of the computational complexity for the model responses
    \begin{equation*}
        f\left(\mat{x}\right) = \innerfrob{\bigotimes_{d=1}^D \mat{v}^{(d)}\left(x_d\right)}{\mat{w}},
    \end{equation*}
    is of \order{M}.
\end{corollary}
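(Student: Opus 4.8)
The plan is to obtain this statement as an immediate specialization of the theorem proved just above it. That theorem establishes that, for any tensor network representation of $\tensorizeTimes{\mat{w}}$, the number of floating point operations needed to evaluate $f(\mat{x}) = \innerfrob{\bigotimes_{d=1}^D \mat{v}^{(d)}(x_d)}{\mat{w}}$ grows as $\order{M^t}$ in the feature dimension $M$, where $t$ is the largest number of singleton (physical) edges attached to any single core. Under the hypothesis of the corollary we have $t = 1$, so substituting $t = 1$ into the bound $\order{M^t}$ yields $\order{M}$, which is exactly the claim. So the proof is one line: invoke the preceding theorem with $t = 1$.

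For completeness I would add a sentence on why the hypothesis $t = 1$ is the natural one and why the argument is not vacuous. The Frobenius inner product in \cref{eq:inner} contracts each of the $D$ singleton edges of $\tensorizeTimes{\mat{w}}$ against the corresponding factor $\mat{v}^{(d)}(x_d)$. When every core carries at most one such edge, each of these $D$ contractions can be absorbed into a single core before any core--core contraction is performed, so no intermediate result ever involves two uncontracted $M$-sized modes at once; hence the per-core cost is linear rather than quadratic (or worse) in $M$. This is precisely the mechanism already quantified by the general theorem, and it is also why the CPD, tensor train and tensor ring — each of whose cores is indexed by exactly one mode of the ambient tensor, so that $t = 1$ — fall under the corollary, recovering the $\order{P}$ guarantee of \cref{thm:CPD}.

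There is essentially no obstacle here: the corollary is a direct consequence of the theorem immediately preceding it, and the substantive content — the combinatorial bookkeeping that counts singleton edges per core and tracks the cost of the contraction order — has already been carried out in that theorem's proof. The only thing one might want to record explicitly, if a self-contained statement is desired, is the trivial fact that CPD, TT and TR all satisfy $t = 1$, which is immediate from their definitions.
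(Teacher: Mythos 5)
Your proposal matches the paper exactly: the corollary is stated without a separate proof precisely because it is the immediate specialization $t=1$ of the preceding theorem's $\mathcal{O}(M^t)$ bound, which is the one-line argument you give. Your additional remarks on why CPD, TT and TR satisfy $t=1$ are consistent with the paper's own commentary following the corollary.
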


Note that most used tensor networks such as CPD, Tucker, TT/TR, MERA, PEPS have $t=1$. An example of a tensor network where $t$ can be $t\geq2$ or higher is hierarchical Tucker. In what follows we derive the computational complexity of the model responses of CPD and TT networks.

% \begin{theorem}
%     Suppose $\tensorize{\mat{w}}$ is a tensor in CPD, TT or TR form. Then model responses and associated gradients
%     \begin{equation*}
%         f\left(\mat{x}\right) = \innerfrob{\bigotimes_{d=1}^D \mat{v}^{(d)}\left(x_d\right)}{\mat{w}},
%     \end{equation*}
%     can be computed in \order{P} instead of \order{\prod_{d=1}^D M_d}, where $P=DMR$ in case of CPD, and $P=DMR^2$ in case of TT or TR.
% \end{theorem}
\nonQuantized*
\begin{proof}\label{proof:1}
    Let $\tensorize{\mat{w}}$ be a tensor in CPD form. Then
    \begin{align*}
    f\left(\mat{x}\right) &= \innerfrob{\bigotimes_{d=1}^D \mat{v}^{(d)}\left(x_d\right)}{\mat{w}} \\
    &= \sum_{m_1=0}^{M_1-1}\cdots \sum_{m_D=0}^{M_D-1} \prod_{d=1}^D v^{(d)}_{m_d} \sum_{r=0}^{R-1} \prod_{d=1}^D w^{(d)}_{m_d r} \\
    &= \sum_{r=0}^{R-1} \sum_{m_1=0}^{M_1-1} \cdots \sum_{m_D=0}^{M_D-1} \prod_{d=1}^D v_{m_d}^{(d)} w_{m_d r}^{(d)} \\
    &= \sum_{r=0}^{R-1} \prod_{d=1}^D \sum_{m_d=0}^{M_d-1} v_{m_d}^{(d)} w_{m_d r}^{(d)}. \\
\end{align*}
Gradients can be computed efficiently by caching $\prod_{d=1}^D\sum_{m_d=0}^{M_d-1}  v_{m_d}^{(d)} w_{r_{d-1} m_d r_{d}}^{(d)}, \ r=1,\ldots,R$.
Hence the computational complexity of the model responses and associated gradients is of \order{DMR}. 
Now let $\tensorize{\mat{w}}$ be a tensor in TT/TR form. Then
    \begin{align*}
    f\left(\mat{x}\right) &= \innerfrob{\bigotimes_{d=1}^D \mat{v}^{(d)}\left(x_d\right)}{\mat{w}} \\
    &= \sum_{m_1=0}^{M_1-1}\cdots \sum_{m_D=0}^{M_D-1} \prod_{d=1}^D v^{(d)}_{m_d} \sum_{r_1=0}^{R_1-1}\cdots\sum_{r_D=0}^{R_D-1} \prod_{d=1}^D w^{(d)}_{r_{d-1} m_d r_{d}} \\
    &= \sum_{r_1=0}^{R_1-1}\cdots\sum_{r_D=0}^{R_D-1} \sum_{m_1=0}^{M_1-1} \cdots \sum_{m_d=0}^{M_D-1} \prod_{d=1}^D v_{m_d}^{(d)} w^{(d)}_{r_{d-1} m_d r_{d}} \\
    &= \sum_{r_1=0}^{R_1-1}\cdots\sum_{r_D=0}^{R_D-1} \prod_{d=1}^D \sum_{m_d=0}^{M_d-1} v_{m_d}^{(d)} w_{r_{d-1} m_d r_{d}}^{(d)}, \\
\end{align*}
which is a sequence of matrix-matrix multiplications. Gradients can be computed efficiently by caching $\sum_{m_d=0}^{M_d-1}  v_{m_d}^{(d)} w_{r_{d-1} m_d r_{d}}^{(d)}, \ r_{d}=1,\ldots,R_{d}, \ d=1,\ldots,D$.
Hence the computational complexity of the model responses and associated gradients is of \order{DMR^2}, where $M=\max(M_1,M_2,\ldots,M_D)$ i.e. \order{P} for both CPD and TT/TR.
\end{proof}

\subsection{Quantized Tensor Network Kernel Machine}\label{appendix:proofs_qtkm}

\begin{theorem}\label{thm:QTKMComplexity1}
    Suppose $\tensorizeQuantizedTimes{\mat{w}}$ is a tensor network. Then the dependency on $M$ on the computational complexity of model responses
    \begin{equation*}
        f\left(\mat{x}\right) = \innerfrob{\bigotimes_{d=1}^D \bigotimes_{k=1}^{K} \mat{s}^{(d,k)}\left(x_d\right)}{\mat{w}},    \end{equation*}
    is of \order{M^{\frac{t}{\log_{Q}}} \log M}, where $t$ is the maximum number of singleton edges per core.
\end{theorem}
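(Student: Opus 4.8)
~\textbf{Plan.} The statement is the quantized analogue of the earlier complexity theorem for tensor-product features: there, summing the Frobenius inner product over the $D$ singleton edges of size $M$ cost \order{M^t}, where $t$ is the maximum number of singleton edges per core. Now the feature map is quantized (\cref{cor:poly,cor:FF}), so the weight tensor carries $DK$ singleton edges, each of size $Q$, instead of $D$ singleton edges of size $M = Q^K$. The plan is to repeat that counting argument in the quantized setting and then re-express the bound in terms of $M$. First I would fix a core of the tensor network and let $t$ be the maximum number of singleton edges it touches; in the quantized network each such singleton edge has size $Q$ rather than $M$. Contracting that core against the corresponding quantized Vandermonde factors $\mat{s}^{(d,k)}$ therefore requires summing over at most $t$ indices each ranging over $Q$ values, i.e. \order{Q^t} FLOPs per core.

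\textbf{Key steps.} (i) Using \cref{cor:poly} and \cref{cor:FF}, rewrite $f(\mat{x}) = \innerfrob{\bigotimes_{d=1}^D\bigotimes_{k=1}^{K}\mat{s}^{(d,k)}(x_d)}{\mat{w}}$ with the quantized weight tensor $\tensorizeQuantizedTimes{\mat{w}}$; note there are now $DK$ singleton edges, each of dimension $Q$. (ii) As in the proof of the non-quantized complexity theorem, observe that evaluating the Frobenius inner product amounts to contracting every singleton edge of the network; doing this core-by-core, a core incident to $t$ singleton edges of size $Q$ contributes \order{Q^t} arithmetic operations, and there are $\mathcal{O}(DK)$ cores, so the total dependence on the singleton-edge sizes is \order{Q^t} per core. (iii) Substitute $K = \log_Q M$ and $Q^t = M^{t/\log_Q M}$... more carefully, $Q = M^{1/K}$ so $Q^t = M^{t/K} = M^{t/\log_Q M}$, and the number of quantized factors per dimension is $K = \log_Q M$, which contributes the extra multiplicative $\log M$ factor. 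Collecting these gives the claimed \order{M^{t/\log_Q M}\log M} dependence on $M$.

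\textbf{Main obstacle.} The arithmetic of the conversion is routine; the only genuine subtlety is bookkeeping the factor $\log M$ correctly — it comes from the number of quantized factors $DK$ (with $K=\log_Q M$) over which one still contracts the bond (non-singleton) edges, not from the singleton contractions themselves, which already produce the $Q^t = M^{t/\log_Q M}$ term. I would be careful to separate the cost of contracting the $t$ singleton edges at a core (which scales like $Q^t$, hence $M^{t/\log_Q M}$) from the cost of threading through the $\mathcal{O}(DK)$ cores of the now-longer chain/tree (which scales like $K = \log_Q M$, hence the extra $\log M$), and to note that bond dimensions are treated as constants for the purpose of this $M$-dependence statement, exactly as in the non-quantized theorem. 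With that separation in place the proof is essentially a one-line recapitulation of the earlier argument with $M \mapsto Q$ on each of $DK$ edges.
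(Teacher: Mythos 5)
Your proposal is correct and follows essentially the same route as the paper: contract the Frobenius inner product over the $D\log_Q M$ singleton edges of size $Q$, obtaining an \order{Q^t\log_Q M} cost, and then rewrite $Q^t = M^{t/\log_Q M}$ to express the dependence on $M$. Your bookkeeping of where the $\log M$ factor comes from (the number of quantized cores, not the per-core singleton contraction) is in fact stated more carefully than in the paper's own one-line argument.
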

\begin{proof}
    Let $Q$ be chosen such that $K = \log_Q M$. Let $t$ be the maximum number of singleton edges per core. Taking the Frobenius inner product (\cref{eq:inner}) involves summing over all singleton edges $\underbrace{Q,Q,\ldots,Q}_{D \log_Q M \ \text{times}}$. Since $Q=\frac{1}{\log_Q M}$, the required number of FLOPS will be \order{Q^t \log_Q M} = \order{M^{\frac{s}{\log_{Q}}}\log M}.
\end{proof}

\begin{corollary}\label{thm:QTKMComplexity2}
Suppose $\tensorizeQuantizedTimes{\mat{w}}$ is a tensor network with $t=1$ maximum number of singleton edges per core. Then the dependency on $M$ on the computational complexity of model responses
    \begin{equation*}
        f\left(\mat{x}\right) = \innerfrob{\bigotimes_{d=1}^D \bigotimes_{k=1}^{K_d} \mat{s}^{(d,k)}\left(x_d\right)}{\mat{w}},    \end{equation*}
    is of \order{\log M}.
\end{corollary}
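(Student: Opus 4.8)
The plan is to obtain this as an immediate specialization of \Cref{thm:QTKMComplexity1}. First I would invoke that theorem with $t = 1$: under the quantized factorization of \Cref{cor:poly,cor:FF}, the model response $f(\mat{x})$ is a full contraction of the tensor network representing $\tensorizeQuantizedTimes{\mat{w}}$ against the $\sum_{d=1}^{D} K_d$ length-$Q$ quantized Vandermonde vectors $\mat{s}^{(d,k)}$, one per singleton edge (cf. the Frobenius inner product in \Cref{eq:inner}). Since each core carries at most one singleton edge, contracting that edge against its corresponding $\mat{s}^{(d,k)}$ costs a factor proportional to $Q$ times the $M$-independent bond dimensions of the network, and there are $\sum_{d=1}^{D} K_d \le D\log_Q M$ such contractions. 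Hence the dependence on $M$ of the total cost is $\mathcal{O}(Q\log_Q M)$.

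The only remaining step is arithmetic. The quantization base $Q$ is a fixed constant chosen by the user, and $\log_Q M = \ln M / \ln Q$, so $Q\log_Q M = \mathcal{O}(\log M)$. Equivalently, in the notation of \Cref{thm:QTKMComplexity1} specialized to $t=1$ one has $M^{1/\log_Q M} = Q$, a constant, and therefore $\mathcal{O}\!\big(M^{1/\log_Q M}\log M\big) = \mathcal{O}(\log M)$, which is the claimed bound.

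The proof is essentially bookkeeping, so there is no genuine obstacle; the only point requiring care is to check that no hidden $M$-dependence enters through either the number of cores or their bond dimensions. The number of cores is exactly what produces the surviving $\log M$ factor, while the bond dimensions are independent of $M$ by construction (quantization fixes every remaining mode to size $Q$), so the stated estimate holds.
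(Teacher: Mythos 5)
Your proposal is correct and follows essentially the same route as the paper, which treats this corollary as an immediate specialization of \Cref{thm:QTKMComplexity1} to $t=1$: summing over the $D\log_Q M$ singleton edges of size $Q$ gives a cost of $\mathcal{O}(Q\log_Q M)=\mathcal{O}(\log M)$ since $Q$ is a fixed constant. Your observation that $M^{1/\log_Q M}=Q$ also correctly resolves the constant factor appearing in the theorem's bound.
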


% \begin{theorem}
%     Consider pure-power and Fourier feature maps factorized as in \cref{cor:poly} and \cref{cor:FF} and suppose $\tensorizeQuantized{\mat{w}}$ is a tensor in CPD, TT or TR form.
%     Then by \cref{thm:CPD}, model responses and associated gradients
%     \begin{equation*}
%         f_{\text{quantized}}\left(\mat{x}\right) = \innerfrob{\bigotimes_{d=1}^D \bigotimes_{k=1}^{K_d} \mat{s}^{(d,k)}\left(x_d\right)}{\mat{w}},
%     \end{equation*}
%     can be computed in \order{P} instead of \order{\prod_{d=1}^D M_d}, where $P=KDQR$ in case of CPD, and $P= KDQR^2$ in case of TT or TR.
% \end{theorem}
% \begin{proof}
%      The proof follows from the proof of \cref{thm:CPD}. Since instead of summing $R$ times over $M_1,M_2,\ldots,M_D$ we are summing $R$ times over $\underbrace{Q,Q,\ldots,Q}_{D \log_Q M \ \text{times}}$, a model response can be evaluated in $QKDR$ FLOPS for CPD and $QKDR^2$ FLOPS for TT. Since $Q$ is a constant which does not dependent on $M$ and $K=\log_Q M$, we have that the computational complexities are respectively \order{\log M DR} and \order{\log M DR^2} for CPD and TT/TR, where $\log M=\max(\log M_1, \log M_2,\ldots,\log M_D)$, i.e. \order{P} for both CPD and TT/TR.
% \end{proof}
\quantized*
\begin{proof}
     The proof follows from the proof of \cref{thm:CPD}. Since instead of summing $R$ times over $M_1,M_2,\ldots,M_D$ we are summing $R$ times over $\underbrace{Q,Q,\ldots,Q}_{D K \ \text{times}}$, a model response can be evaluated in $QKDR$ FLOPS for CPD and $QKDR^2$ FLOPS for TT. Since $Q$ is a constant which does not dependent on $M$ and $K=\log_Q M$, we have that the computational complexities are respectively \order{\log M DR} and \order{\log M DR^2} for CPD and TT/TR, where $K=\log M=\max(\log M_1, \log M_2,\ldots,\log M_D)$, i.e. \order{P} for both CPD and TT/TR.
\end{proof}

\section{FASTER MULTI-CONVEX OPTIMIZATION ALGORITHMS}\label{fasterALS}
Quantized features allow to speedup \cref{eq:TensorKernelMachinesQuantized} for a large class of multi-convex solvers such as alternating least-squares~\citep{comon_tensor_2009,kolda_tensor_2009,uschmajew_local_2012,holtz_alternating_2012}, the density matrix renormalization group (DMRG)~\citep{white_density_1992} and Riemannian optimization~\citep{novikov_exponential_2018,novikov_automatic_2021}. These solvers exploit the multi-linearity of tensor networks in order to express the empirical risk as a function of only one core of the weight tensor in tensor network form per iteration, also known as sub-problem. After solving the ensuing optimization sub-problem, this procedure is repeated for for the remaining cores, defining one epoch. The whole procedure is then repeated until convergence. 
When a convex quadratic loss function is used, computational benefits associated with quantization arise as it enables to solve a series of quadratic problems exactly. This is common practice in literature, see for instance \citet{wahls_learning_2014,chen_parallelized_2018,novikov_exponential_2018,wesel_large-scale_2021}.

In the exemplifying case of CPD, TT and tensor ring, for a fixed number of model parameters $P$, quantization allows to solve each sub-problem at a reduced computational cost of \order{\nicefrac{P^2}{D^2}} compared to a cost of \order{\nicefrac{P^2}{D^2 (\log M)^2}}. This yields a sub-problem complexity which is \emph{independent} of $M$. 
A similar reduction follows for other one-layered networks. Quantifying the computational gains for other structures of tensor networks is less straightforward.

\section{NUMERICAL EXPERIMENTS}

\subsection{Improved Generalization Capabilities}\label{appendix:generalization}
We report in \cref{fig:generalization_train} the training error on the examined datasets in \cref{sec:generalization}. As one can observe, QTKM outperforms TKM in terms of training error (\cref{fig:generalization_train}) and test error (\cref{fig:generalization}).

\begin{figure*}[h]
    %\captionsetup{font=normalsize,labelfont={bf,sf}}
    %\captionsetup[sub]{font=small,labelfont={bf,sf}}
    \centering
    \begin{subcaptionblock}[T][][c]{.25\textwidth}
    \centering
    \includegraphics[width=\textwidth]{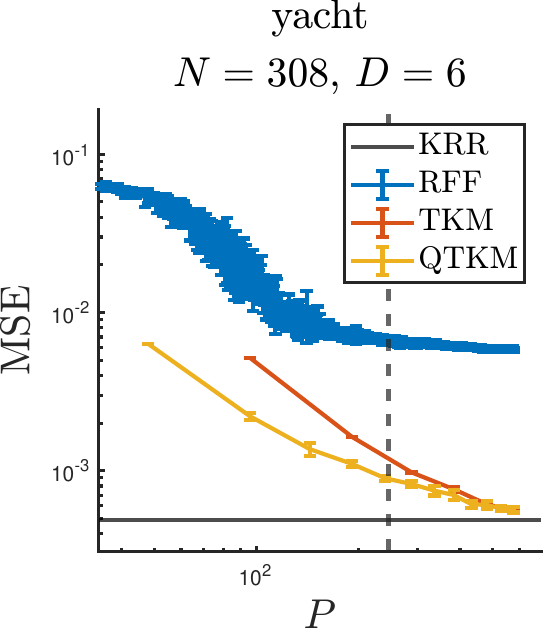}
    \end{subcaptionblock}%
    \begin{subcaptionblock}[T][][c]{.25\textwidth}
    \centering
    \includegraphics[width=\textwidth]{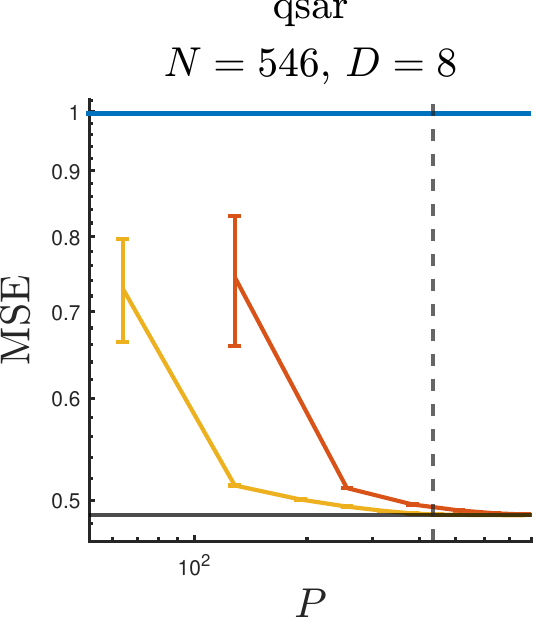}
    \end{subcaptionblock}%
    \begin{subcaptionblock}[T][][c]{.25\textwidth}
    \centering
    \includegraphics[width=\textwidth]{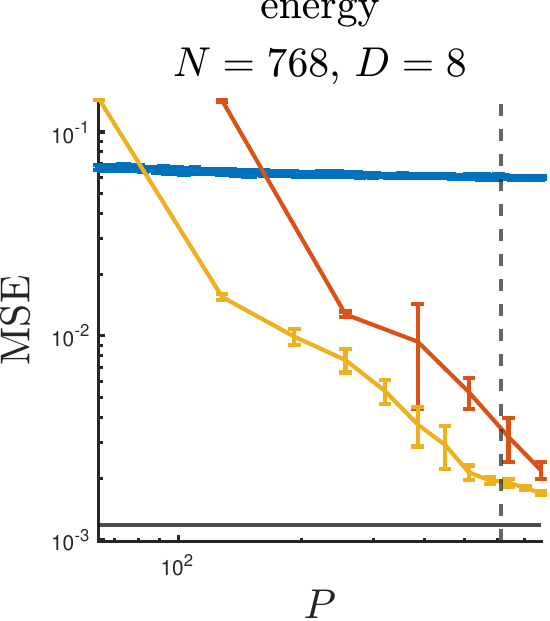}
    \end{subcaptionblock}%
    \begin{subcaptionblock}[T][][c]{.25\textwidth}
    \centering
    \includegraphics[width=\textwidth]{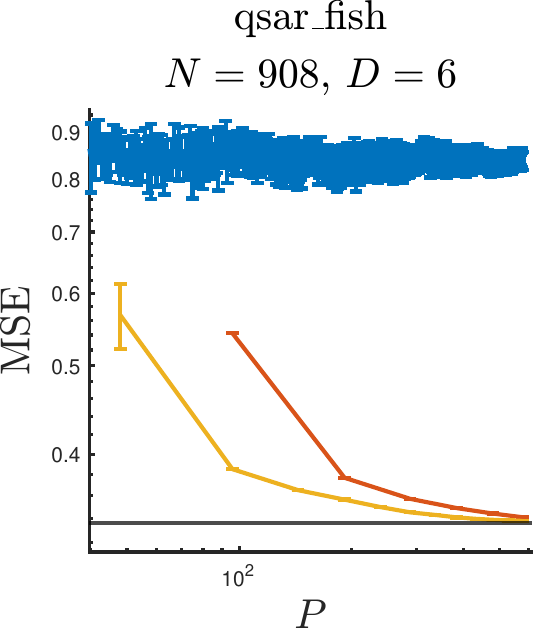}
    \end{subcaptionblock}\newline
    \begin{subcaptionblock}[T][][c]{.25\textwidth}
    \centering
    \includegraphics[width=\textwidth]{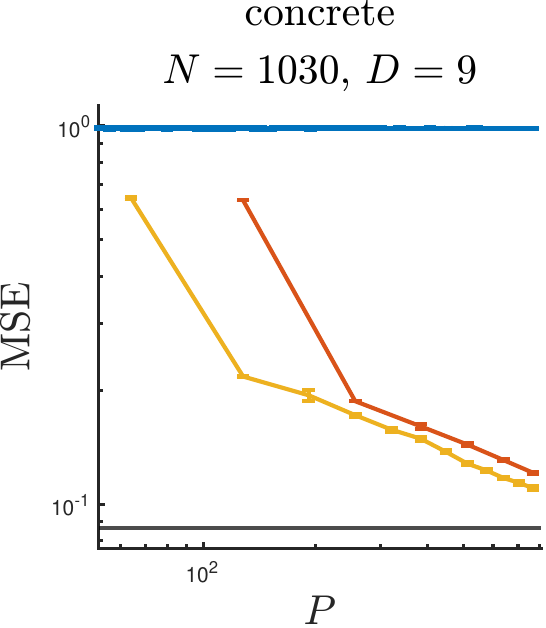}
    \end{subcaptionblock}%
    \begin{subcaptionblock}[T][][c]{.25\textwidth}
    \centering
    \includegraphics[width=\textwidth]{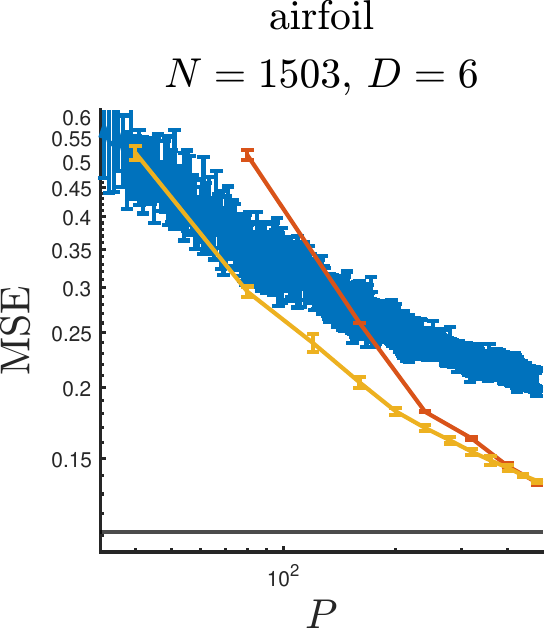}
    \end{subcaptionblock}%
    \begin{subcaptionblock}[T][][c]{.25\textwidth}
    \centering
    \includegraphics[width=\textwidth]{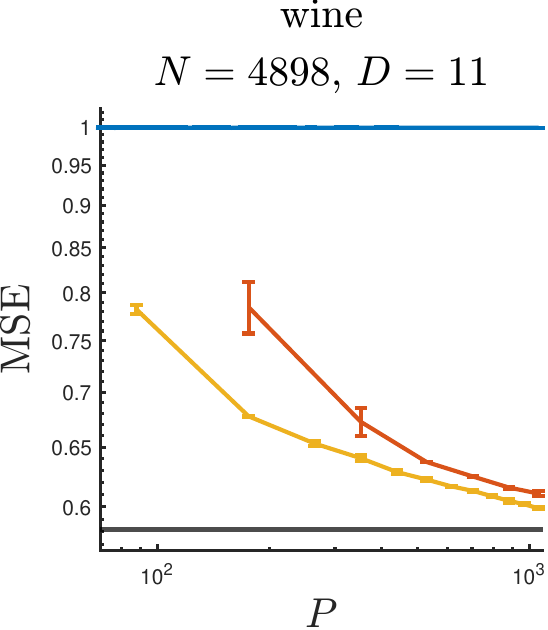}
    \end{subcaptionblock}%
    \begin{subcaptionblock}[T][][c]{.25\textwidth}
    \centering
    \includegraphics[width=\textwidth]{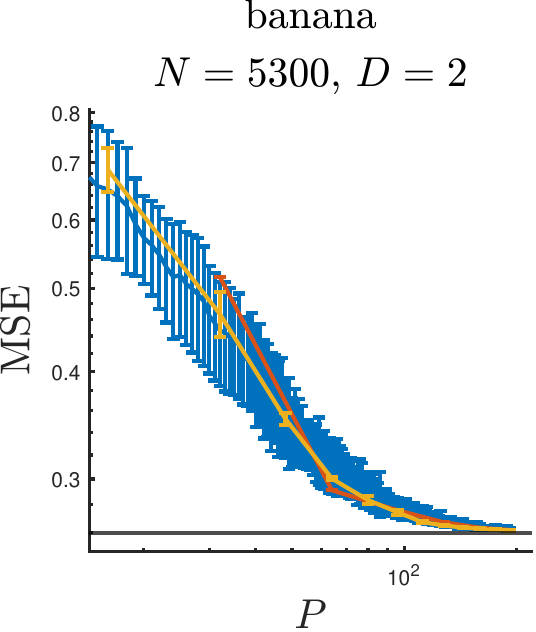}
    \end{subcaptionblock}%
    \caption{Plots of the train mean squared error as a function of the number of model parameters $P$, for different real-life datasets. In blue, random Fourier features \citep{rahimi_random_2007}, in red tensorized kernel machines with Fourier features \citep{wahls_learning_2014,stoudenmire_supervised_2016,kargas_supervised_2021,wesel_large-scale_2021}, in yellow quantized kernel machines with Fourier features, with quantization $Q=2$. The gray horizontal full line is the full unconstrained optimization problem, which corresponds to kernel ridge regression (KRR). The grey vertical dotted line is set at $P=N$. It can be seen that for $P<N$ case, quantization allows to achieve better performance with respect to the non-quantized case on the training set (this figure) \emph{and} on the test set (\cref{fig:generalization}).} 
    \label{fig:generalization_train}
\end{figure*}

\subsection{Regularizing Effect of Quantization}\label{appendix:sound}
In \cref{tbl:sound} we repeat the number of model parameters $P=2\log_2 M R$, the compression ratio of the quantized model weights $\nicefrac{M}{P}$, as well as the relative approximation error of the weights $\nicefrac{\vert\vert\mat{w}-\mat{w}_\text{CPD}\vert\vert}{\vert\vert\mat{w}\vert\vert}$ and the standardized mean absolute error (SMAE) of the reconstruction error on the test set as a function of the CPD rank.
\begin{table}[h]
\centering
\sisetup{scientific-notation = false,
        table-number-alignment=center,
        table-alignment-mode=none}
\caption{Model parameters, compression ratio and relative approximation error of the weights, and standardized mean absolute error on the test data as a function of the CPD rank.}
\label{tbl:sound}
\begin{tabular}{
S[table-number-alignment = center]
S[table-number-alignment = center]
S[table-format=2.1]
S[table-format=1.3]
S[table-format=1.3]
}
\toprule                           
 {$R$}                &  {$P$}            & {$\nicefrac{M}{P}$}     & {$\nicefrac{\vert\vert\mat{w}-\mat{w}_\text{CPD}\vert\vert}{\vert\vert\mat{w}\vert\vert}$}    & {SMAE}             \\
\cmidrule{1-5}
10              & 260   &  31.5  &  0.841  &  0.579  \\
25              & 650   &  12.6  &  0.712  &  0.571  \\
50              & 1300  &  6.3  &  0.528  &  0.451   \\
100             & 2600  & 3.1   &  0.310  &  0.182   \\
\bottomrule
\end{tabular}
\end{table}

\end{document}